\newtheorem{thm}{Theorem}
\newtheorem{lemma}{Lemma}
\newtheorem{dfn}{Definition}
\newtheorem{ex}{Example}
\newenvironment{proof}{\vspace{-1mm}{\bf Proof:}}{\hfill $\Box$ }
\newenvironment{sketch}{\vspace{-1mm}\noindent{\bf Proof sketch:}}{\hfill $\Box$ }
\newcommand{\kibitz}[2]{\ifnum\Comments=1\textcolor{#1}{#2}\fi}
\newcommand{\pr}{\text{Pr}}
\newcommand{\kendall}{\text{Kendall}}
\newcommand{\kemeny}{\text{Kemeny}}
\newcommand{\mb}{{\mathcal B}}
\newcommand{\mf}{{\mathcal F}}
\newcommand{\ra}{\rightarrow}
\newcommand{\mm}{{ \mathcal M}}
\newcommand{\ms}{{\mathcal S}}
\newcommand{\ml}{{\mathcal L}}
\newcommand{\mc}{{\mathcal C}}
\newcommand{\md}{{\mathcal D}}
\newcommand{\mn}{{\mathcal N}}
\newcommand\Omit[1]{ }
\begin{document}
%
\title{A Statistical Decision-Theoretic Framework for Social Choice}
\nipsfinalcopy
\author{Hossein Azari Soufiani\thanks{azari@google.com, Google Research,  New York, NY 10011, USA. The work was done when the author was at Harvard University.} \And David C.~Parkes \thanks{parkes@eecs.harvard.edu, Harvard University, Cambridge, MA 02138, USA.}\And Lirong Xia\thanks{xial@cs.rpi.edu, Rensselaer Polytechnic Institute, Troy, NY 12180, USA.}
}
\maketitle
\begin{abstract}
%
In this paper, we take a statistical decision-theoretic viewpoint on social choice, putting a focus on
the decision to be made on behalf of a system of
agents.
In our framework, we are given a statistical ranking
model, a decision space, and a loss function defined on (parameter, decision) pairs, and formulate 
social choice mechanisms as decision rules that minimize expected loss.
This suggests a general framework for the design and analysis of new social choice mechanisms. 
We compare {\em Bayesian estimators}, which minimize Bayesian expected loss, for the Mallows model and the Condorcet model respectively, and the Kemeny rule.
We consider various normative properties, in addition to computational complexity and asymptotic behavior. In particular, we show that the Bayesian estimator for the Condorcet model satisfies some desired properties such as anonymity, neutrality, and monotonicity, can be computed in polynomial time, and is asymptotically different from the other two rules when the data are generated from the Condorcet model for some ground truth parameter.
\end{abstract}

\section{Introduction}

Social choice studies the design and evaluation
of voting rules (or rank aggregation rules).
There have been two main perspectives: reach a compromise among
subjective preferences of agents, or make an objectively correct
decision. The former has been extensively studied in classical social
choice in the context of political elections, while the latter is relatively less developed, even though it can be dated back to the Condorcet Jury
Theorem in the 18th century~\citep{Condorcet1785:Essai}.

In many multi-agent and social choice scenarios the main consideration is to achieve the
second objective, and make an objectively correct decision. Meanwhile, we also want to respect agents' preferences and opinions, and require the voting rule to satisfy well-established normative properties in social choice. For example, when a group of friends vote to choose a restaurant for dinner, perhaps the most important goal is to find an objectively good restaurant, but it is also important to use a good voting rule in the social choice sense. Even for applications with less societal context, e.g.~using voting rules to aggregate rankings in meta-search engines~\cite{Dwork01:Rank}, recommender systems~\cite{Ghosh99:Voting},  crowdsourcing~\cite{Mao13:Better}, semantic webs~\cite{Porello13:Ontology}, some social choice normative properties are still desired. For example, {\em monotonicity} may be desired, which requires that raising the position of an alternative in any vote does not hurt the alternative in the outcome of the voting rule. In addition, we require voting rules to be efficiently computable.

Such scenarios propose the following new challenge: 
{\em How can we design new voting rules with good statistical properties as well as social choice normative properties?}



To tackle this challenge, we develop a general framework
that  adopts {\em statistical decision
  theory}~\citep{Berger85:Statistical}.
Our approach 
couples a statistical
ranking model with an explicit decision space
and loss function.
 Given these, we can adopt {\em Bayesian estimators} as
social choice mechanisms, which 
make decisions to minimize the
expected loss w.r.t.~the posterior distribution on the parameters
(called the {\em Bayesian risk}). 
This provides a principled
methodology for the design and analysis of new voting rules.

To show the viability of the framework, we focus on selecting multiple alternatives (the alternatives that can be thought of as being ``tied'' for the first place) under a natural extension of the $0$-$1$ loss function 
for two models: let $\mm_\varphi^1$ denote the {\em Mallows model} with fixed dispersion~\cite{Mallows57:Non-null}, and let $\mm_\varphi^2$ denote the {\em Condorcet model} proposed by Condorcet in the 18th century~\cite{Condorcet1785:Essai,Young88:Condorcet}. In both models the dispersion parameter,
denoted $\varphi$, is taken as a fixed parameter. The difference is that in the Mallows model the parameter space is composed of all linear orders over alternatives,  while in the Condorcet model the parameter space is composed of all possibly cyclic rankings over alternatives (irreflexive, antisymmetric, and total binary relations). $\mm_\varphi^2$ is a natural model that captures real-world scenarios where the ground truth may contain cycles, or agents' preferences are cyclic, but they have to report a linear order due to the protocol. More importantly, as we will show later, a Bayesian estimator on $\mm_\varphi^2$ is superior from a computational viewpoint.

Through this approach, we obtain two voting rules as Bayesian estimators and then evaluate them with respect to various normative properties, including {\em
  anonymity, neutrality, monotonicity, the majority criterion, the Condorcet
  criterion} and {\em consistency}. Both rules satisfy anonymity, neutrality, and monotonicity, but fail the majority criterion, Condorcet criterion,\footnote{The new voting rule for $\mm_\varphi^1$ fails them for all $\varphi<1/\sqrt 2$.} and consistency. Admittedly, the two rules do not enjoy outstanding normative properties, but they are not bad either. 
We also investigate the
computational complexity of the two  rules.  Strikingly, despite
the similarity of the two models, the Bayesian estimator for
$\mm_\varphi^2$ can be computed in polynomial time, while computing
the Bayesian estimator for $\mm_\varphi^1$ is ${\sf P}_{||}^{\sf
  NP}$-hard, which means that it is at least {\sf NP}-hard. Our
results are summarized in Table~\ref{tab:comp}. 

We also compare the asymptotic outcomes of the two rules with the Kemeny rule for winners, which is a natural extension of the {maximum likelihood estimator} of $\mm_\varphi^1$ proposed by~\citet{Fishburn77:Condorcet}.
It turns out that when $n$ votes are generated under $\mm_\varphi^1$, all three rules select the same winner asymptotically almost surely (a.a.s.) as $n\ra\infty$. When the votes are generated according to $\mm_\varphi^2$, the rule for $\mm_\varphi^1$ still selects the same winner as Kemeny a.a.s.; however, for some parameters, the winner selected by the rule for $\mm_\varphi^2$ is different with non-negligible probability. These are confirmed by experiments on synthetic datasets.

\begin{table*}[t]
\centering
\begin{tabular}{|@{\hspace{.5mm}}c@{\hspace{.5mm}}|@{\small \hspace{.5mm}}c@{\hspace{.5mm} }|@{\hspace{.5mm} }c@{\hspace{.5mm}}|@{\hspace{.5mm}}c@{\hspace{.5mm}}|@{\hspace{.5mm}}c@{\hspace{.5mm}}|@{\hspace{.5mm}}c@{\hspace{.5mm}}|}
\hline 
& \begin{tabular}{@{\small }c@{}}Anonymity, neutrality\\ \small Monotonicity
\end{tabular}& \begin{tabular}{@{\small }c@{}} Majority,\\ Condorcet\end{tabular}& \small  Consistency& \small Complexity&\small  Min.~Bayesian risk\\ 
\hline Kemeny& Y& Y& N& \begin{tabular}{@{}c@{}}{\sf NP}-hard,  ${\sf P}_{||}^{\sf NP}$-hard\end{tabular}& N\\
\hline \begin{tabular}{c}Bayesian est.~of \\  $\mm_\varphi^1$ {\footnotesize (uni.~prior)}\end{tabular} & Y& \begin{tabular}{c}N \end{tabular}& N  & \begin{tabular}{c}{\sf NP}-hard,  ${\sf P}_{||}^{\sf NP}$-hard\\ {\footnotesize (Theorem~\ref{thm:compmodel1})}\end{tabular}& Y\\
\hline \begin{tabular}{c}Bayesian est.~of \\ $\mm_\varphi^2$ {\footnotesize (uni.~prior)}\end{tabular}& Y& \begin{tabular}{c}N \end{tabular}& N  & {\sf P} {\footnotesize (Theorem~\ref{thm:compmodel2})}& Y\\
\hline
\end{tabular}\vspace{-2mm}
\caption{\small Kemeny for winners vs.~Bayesian estimators of $\mm_\varphi^1$ and $\mm_\varphi^2$ to choose {\em winners}. \label{tab:comp}}\vspace{-4mm}
\end{table*}

{\noindent \bf Related work.} Along the second perspective in social choice (to make an objectively correct decision), in addition
to Condorcet's statistical approach to social
choice~\citep{Condorcet1785:Essai,Young88:Condorcet}, most previous work in economics, political science, and statistics focused on extending the theorem to heterogeneous, correlated, or strategic agents for two alternatives, see~\cite{Nitzan84:Significance,Austen96:Information} among many others. Recent work in computer science views agents' votes as i.i.d.~samples from a statistical model, and computes the MLE to estimate the parameters that
maximize the likelihood~\citep{Conitzer05:Common,Conitzer09:Preference,Xia10:Aggregating,Xia11:Maximum,Azari12:Random,Procaccia12:Maximum,Caragiannis13:When}. A limitation of these approaches is that they estimate the
parameters of the model, but may not directly inform the right {\em 
decision} to make in the multi-agent context. 
%
The main approach has
been to return the modal rank order implied by the estimated parameters,
or the alternative with the highest, predicted marginal probability of
being ranked in the top position.

%
%
%
There have also been some proposals to go beyond MLE in social choice. In
fact, \citet{Young88:Condorcet} proposed to select a winning
alternative that is {\em ``most likely to be the best (i.e.,
  top-ranked in the true ranking)''} and provided formulas to compute it for three alternatives. This idea has been formalized and extended by~\citet{Procaccia12:Maximum} to choose a given number of alternatives with highest marginal probability under the Mallows model. More recently, independent to our work, \citet{Elkind14:How} investigated a similar question for choosing multiple winners under the Condorcet model. 
  We will see that these are special cases of our proposed framework in Example~\ref{ex:young}.~\citet{Pivato13:Voting} conducted a similar study
to~\citet{Conitzer05:Common}, examining voting rules that can be
interpreted as expect-utility maximizers.

We are not aware of previous work that frames the problem of social
choice from the viewpoint of statistical decision theory, which is our main conceptual contribution.
Technically, the approach taken in this paper advocates a general paradigm of ``design by statistics, evaluation by social choice and computer science''. We are not aware of a previous work following this paradigm to design and evaluate new rules. Moreover, the normative properties for the two voting rules investigated in this paper are novel, even though these rules are not really novel. Our result on the computational complexity of the first rule strengthens the NP-hardness result by~\citet{Procaccia12:Maximum}, and the complexity for the second rule (Theorem~\ref{thm:comp}) was independently discovered by~\citet{Elkind14:How}. 

%
%
The statistical decision-theoretic framework is quite general, 
allowing considerations such as
estimators that minimize the maximum expected loss, or the maximum expected regret~\citep{Berger85:Statistical}. 
In a different context, focused on uncertainty about the availability
of alternatives,~\citet{Lu10:Unavailable}
adopt a  decision-theoretic view of 
the design of an optimal voting rule. \citet{Caragiannis14:Modal} studied the robustness of social choice mechanisms w.r.t.~model uncertainty, and characterized a unique social choice mechanism that is consistent w.r.t.~a large class of ranking models.

A number of recent papers in computational social choice take utilitarian and decision-theoretical approaches towards social choice~\cite{Procaccia06:Distortion,Caragiannis11:Voting,Boutilier11:Probabilistic,Boutilier12:Optimal}. Most of them evaluate the joint decision w.r.t.~agents' {\em subjective} preferences, for example the sum of agents' subjective utilities (i.e.~the {\em social welfare}). We don't view this as fitting into the classical approach to {\em statistical} decision theory as formulated by~\citet{Wald50:Statistical}. In our framework, the joint decision is evaluated objectively w.r.t.~the ground truth in the statistical model. 
Several papers in machine learning developed algorithms to compute MLE or Bayesian estimators for popular ranking models~\cite{Kuo09:Learning,Long10:Active,Lu11:Learning}, but without considering the normative properties of the estimators.
\vspace{-1mm}\section{Preliminaries}\vspace{-3mm}
In social choice, we have a set of $m$ alternatives
$\mc=\{c_1,\ldots,c_m\}$ and a set of $n$ agents. Let $\ml(\mc)$
denote the set of all {linear orders} over $\mc$. For any alternative
$c$, let $\ml_c(\mc)$ denote the set of linear orders over $\mc$ where
$c$ is ranked at the top. Agent $j$ uses a {linear order}
$V_j\in\ml(\mc)$ to represent her preferences, called her {\em vote}.
The collection of agents votes is called a {\em profile}, denoted by
$P=\{V_1,\ldots,V_n\}$. A {\em (irresolute) voting rule}
$r:\ml(\mc)^n\ra (2^\mc\setminus\emptyset)$ selects a set of winners that are ``tied'' for the first place
for every profile of $n$ votes.

For any pair of linear orders $V,W$, let $\kendall(V,W)$ denote the {\em Kendall-tau distance} between $V$ and $W$, that is, the number of different pairwise comparisons in $V$ and $W$. 
The {\em Kemeny rule} (a.k.a.~{\em Kemeny-Young
  method})~\citep{Kemeny59:Mathematics,Young78:Consistent} selects all
{\em linear orders} with the minimum Kendall-tau distance from the
preference profile $P$, that is,
$\text{Kemeny}(P)=\arg\min_{W}\kendall(P,W)$. The most well-known variant of Kemeny
to select winning alternatives, denoted by $\kemeny_\mc$, is due
to~\citet{Fishburn77:Condorcet}, who defined it as a voting rule that selects all
alternatives that are ranked in the top position of some winning
linear orders under the Kemeny rule. That is, $\kemeny_\mc(P)=\{top(V):
V\in\text{Kemeny}(P)\}$, where $top(V)$ is the top-ranked alternative
in $V$.

Voting rules are often evaluated by the following  normative properties. 
An irresolute rule $r$ satisfies:

$\bullet$ {\em anonymity}, if $r$ is insensitive to permutations over agents;

$\bullet$ {\em neutrality}, if $r$ is insensitive to permutations over alternatives;

$\bullet$ {\em monotonicity}, if for any $P$, $c\in r(P)$, and any $P'$ that is obtained from $P$ by only raising the positions of $c$ in one or multiple votes, then $c\in r(P')$;

$\bullet$ {\em Condorcet criterion}, if for any profile $P$ where a Condorcet winner exists, it must be the unique winner. A Condorcet winner is the alternative that beats every other alternative in pair-wise elections.

$\bullet$ {\em majority criterion}, if for any profile $P$ where an alternative $c$ is ranked in the top positions for more than half of the votes, then $r(P)=\{c\}$. If $r$ satisfies Condorcet criterion then it also satisfies the majority criterion.

$\bullet$ {\em consistency}, if for any pair of profiles $P_1,P_2$ with $r(P_1)\cap r(P_2)\neq\emptyset$, $r(P_1\cup P_2)=r(P_1)\cap r(P_2)$.

For any profile $P$, its {\em weighted majority graph (WMG)}, denoted
by $\text{WMG}(P)$, is a weighted directed graph whose vertices are
$\mc$, and there is an edge between any pair of alternatives $(a,b)$
with weight $w_P(a,b)=\#\{V\in P:a\succ_V b\}-\#\{V\in P:b\succ_V
a\}$.

A parametric 
model $\mm=(\Theta,\ms,\Pr)$ is composed of three parts:
a {\em parameter space} $\Theta$, a {\em sample space} $\ms$ composing of all datasets, and a
set of probability distributions over $\ms$ indexed by elements of
$\Theta$: for each $\theta\in\Theta$, the distribution indexed by
$\theta$ is denoted by $\Pr(\cdot|\theta)$.\footnote{This
notation should not be taken
to mean a conditional distribution over $\ms$ unless we are taking
  a Bayesian point of view.}

Given a parametric model $\mm$, a {\em maximum likelihood estimator
  (MLE)} is a function $f_\text{MLE}:\ms\ra \Theta$ such that for any
data $P\in\ms$, $f_\text{MLE}(P)$ is a parameter that maximizes the
likelihood of the data. That is, $f_\text{MLE}(P)\in
\arg\max_{\theta\in\Theta}\Pr(P|\theta)$.

In this paper we focus on {\em parametric ranking models}.
Given $\mc$, a parametric ranking model $\mm_\mc=(\Theta,\Pr)$ is
composed of a parameter space $\Theta$ and a distribution
$\Pr(\cdot|\theta)$ over $\ml(\mc)$ for each $\theta\in\Theta$, such
that for any 
number of voters $n$, the sample space is $\ms_n=\ml(\mc)^n$, where each
vote is generated i.i.d. from $\Pr(\cdot|\theta)$. Hence, for any
profile $P\in\ms_n$ and any $\theta\in\Theta$, we have
$\Pr(P|\theta)=\prod_{V\in P}\Pr(V|\theta)$. We
omit the sample space because it is
determined by $\mc$ and $n$. 
%

\begin{dfn} In the {\em Mallows model}~\citep{Mallows57:Non-null}, a parameter is composed of a linear order $W\in \ml(\mc)$ and  a  {\em dispersion parameter} $\varphi$ with $0<\varphi<1$. For any profile $P$ and $\theta=(W,\varphi)$,  $\Pr(P|\theta)=\prod_{V\in P}\frac{1}{Z}\varphi^{\text{Kendall}(V,W)}$, where $Z$ is the normalization factor with $Z=\sum_{V\in \ml(\mc)}\varphi^{\text{Kendall}(V,W)}$. \end{dfn}

{\em Statistical decision theory}~\citep{Wald50:Statistical,Berger85:Statistical} studies
scenarios where the decision maker must make a {\em decision} $d\in\md
$ based on the data $P$ generated from a parametric model, generally
$\mm=(\Theta,\ms,\Pr)$. The quality of the decision is evaluated by a
{\em loss function} $L:\Theta\times\md \ra{\mathbb R}$, which takes
the {\em true} parameter and the decision as inputs.

In this paper, we focus on the {\em Bayesian principle} of statistical
decision theory to design social choice mechanisms as choice functions
that minimize the {\em Bayesian risk} under a prior distribution over
$\Theta$. More precisely, the Bayesian risk, $R_B(P, d)$, is the
expected loss of the decision $d$ when the parameter is generated
according to the posterior distribution given data $P$. That is, $R_B(P,
d)=E_{\theta|P}L(\theta,d)$. Given a parametric model $\mm$, a loss
function $L$, and a prior distribution over $\Theta$, a
(deterministic) {\em Bayesian estimator} $f_B$ is a decision rule that
makes a deterministic decision in $\md$ to minimize the Bayesian risk, that is, for any $P\in \ms$,
$f_B(P)\in\arg\min_{d}R_B(P, d)$. We focus on deterministic estimators
in this work and leave randomized estimators for future research.

\begin{ex}\label{ex:sdt} When $\Theta$ is discrete, an MLE of a
  parametric model $\mm$ is a Bayesian estimator of the statistical
  decision problem $(\mm,\md =\Theta,L_{0\text{-}1})$ under the
  uniform prior distribution, where $L_{0\text{-}1}$ is the {\em
    $0$-$1$ loss function} such that $L_{0\text{-}1}(\theta,d)=0$ if
  $\theta =d$, otherwise $L_{0\text{-}1}(\theta,d)=1$.
\end{ex}

In this sense, all previous MLE approaches in social choice can be
viewed as the Bayesian estimators of a statistical decision-theoretic
framework for social choice where $\md=\Theta$, a $0$-$1$ loss
function, and the uniform prior.

\section{Our Framework}\vspace{-2mm}
Our framework is quite general and flexible because we can choose any
parametric ranking model, any decision space, any loss function, and
any prior to use the Bayesian estimators social choice mechanisms.
Common choices of both $\Theta$ and $\md $ are $\ml(\mc)$, $\mc$, and
$(2^{\mc}\setminus\emptyset)$.

\begin{dfn} 
A statistical decision-theoretic framework for social choice is a tuple $\mf = (\mm_\mc,\md,L)$, where $\mc$ is the set of alternatives, $\mm_\mc=(\Theta,\Pr)$ is a parametric ranking model, $\md $ is the decision space, and $L:\Theta\times \md \ra {\mathbb R}$ is a loss function.
\end{dfn}

Let $\mb(\mc)$ denote the set of all irreflexive, antisymmetric, and
total binary relations over $\mc$. For any $c\in\mc$, let $\mb_c(\mc)$
denote the relations in $\mb(\mc)$ where $c\succ a$ for all
$a\in\mc-\{c\}$. It follows that $\ml(\mc)\subseteq \mb(\mc)$, and moreover, the
Kendall-tau distance can be defined to count the number of pairwise disagreements between elements of 
$\mb(\mc)$.  

In the rest of the paper, we focus on the following
two parametric ranking models, where the dispersion is a fixed parameter.

\begin{dfn}[Mallows model with fixed dispersion, and the Condorcet model] Let 
  $\mm_\varphi^1$ denote the {\em Mallows model with fixed dispersion}, where the parameter space is $\Theta=\ml(\mc)$ and given
  any $W\in\Theta$, $\Pr(\cdot |W)$ is $\Pr(\cdot |(W,\varphi))$ in
  the Mallows model, where $\varphi$ is fixed.

In the Condorcet model, $\mm_\varphi^2$, the parameter space is $\Theta=\mb(\mc)$. For any $W\in \Theta$ and any profile $P$, we have $\Pr(P|W)=\prod_{V\in P}\left(\frac{1}{Z}\varphi^{\text{Kendall}(V,W)}\right)$, where $Z$ is the normalization factor such that  $Z=\sum_{V\in \mb(\mc)}\varphi^{\text{Kendall}(V,W)}$, and parameter $\varphi$ is fixed.\footnote{In the Condorcet model the sample space is $\mb(\mc)^n$~\cite{Xia14:Deciphering}. We study a variant with sample space $\ml(\mc)^n$.}
\end{dfn}
$\mm_\varphi^1$ and $\mm_\varphi^2$ degenerate to 
the Condorcet model for two alternatives~\citep{Condorcet1785:Essai}. The Kemeny rule that selects a linear order is an MLE of $\mm_\varphi^1$ for any $\varphi$. 

We now formally define two statistical decision-theoretic frameworks associated with $\mm_\varphi^1$ and $\mm_\varphi^2$, which are the focus of the rest of our paper. 
\begin{dfn} \label{dfn:framework} For $\Theta=\ml(\mc)$ or $\mb(\mc)$, any $\theta\in\Theta$, and any $c\in\mc$, we define a loss function $L_{top}(\theta,c)$ such that $L_{top}(\theta,c)=0$ if for all $b\in\mc$, $c\succ b$ in $\theta$; otherwise $L_{top}(\theta,c)=1$.

Let $\mf_\varphi^1=(\mm_\varphi^1,2^\mc\setminus\emptyset,L_{top})$ and $\mf_\varphi^2= (\mm_\varphi^2,2^\mc\setminus\emptyset,L_{top})$, where for any $C\subseteq \mc$, $L_{top}(\theta,C)=\sum_{c\in C}L_{top}(\theta,c)/|C|$. Let  $f_B^1$ (respectively, $f_B^2$) denote the Bayesian estimators of  $\mf_\varphi^1$ (respectively, $\mf_\varphi^2$) under the 
uniform prior.
\end{dfn}
We note that $L_{top}$ in the above definition takes a parameter and a decision in $2^\mc\setminus\emptyset$ as inputs, which makes it different from the $0$-$1$ loss function $L_{0\text{-}1}$ that takes a pair of parameters as inputs, as the one in Example~\ref{ex:sdt}. Hence, $f_B^1$ and $f_B^2$ are {\em not} the MLEs of their respective models, as was the case in Example~\ref{ex:sdt}. We focus on voting rules obtained by our framework with $L_{top}$. Certainly our framework is not limited to  this loss function.
\begin{ex}\label{ex:young}
Bayesian estimators 
$f_B^1$ and $f_B^2$ coincide with
  \citet{Young88:Condorcet}'s idea of selecting the alternative that is
  {\em ``most likely to be the best (i.e., top-ranked in the true
    ranking)''}, under $\mf_\varphi^1$ and $\mf_\varphi^2$
respectively. This gives a theoretical justification of Young's idea and other followups under our framework. Specifically, $f_B^1$ is similar to rule studied by~\citet{Procaccia12:Maximum} and $f_B^2$ was independently studied by~\citet{Elkind14:How}.
\end{ex}

The following lemma provides a convenient way to compute the
likelihood in $\mm_\varphi^1$ and $\mm_\varphi^2$ from the WMG.

\begin{lemma}\label{lem:wmg} In $\mm_\varphi^1$ (respectively,  $\mm_\varphi^2$), for any $W\in\ml(\mc)$ (respectively, $W\in\mb(\mc)$) and any  profile $P$, $\Pr(P|W)\propto \prod_{c\succ_W b}\varphi^{-w_P(c,b)/2}$.\end{lemma}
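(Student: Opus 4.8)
The plan is to expand the likelihood and regroup the exponent of $\varphi$ by \emph{pairs of alternatives} rather than by votes. Writing out the product, $\Pr(P|W)=Z^{-n}\,\varphi^{\sum_{V\in P}\kendall(V,W)}$, so the first thing to check is that $Z$ does not depend on $W$. In $\mm_\varphi^1$ this holds because $S_m$ acts transitively on $\ml(\mc)$ and preserves the Kendall-tau distance, so relabelling alternatives carries the sum defining $Z$ at one $W$ onto that at any other. In $\mm_\varphi^2$ one instead identifies $\mb(\mc)$ with $\{0,1\}^{\binom m2}$ by recording, for each unordered pair, which of its two orientations $W$ uses; under this identification $\kendall$ becomes Hamming distance and $Z=\sum_{k}\binom{\binom m2}{k}\varphi^k=(1+\varphi)^{\binom m2}$, visibly independent of $W$.

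Next I would rewrite the exponent by exchanging the two sums. Since $\kendall(V,W)$ counts the unordered pairs on which $V$ and $W$ disagree, $\sum_{V\in P}\kendall(V,W)=\sum_{c\succ_W b}\#\{V\in P:\,b\succ_V c\}$, where the outer sum runs over the $\binom m2$ ordered pairs $(c,b)$ with $c\succ_W b$ — well defined because $W$ is antisymmetric and total, so each unordered pair is counted exactly once. Combining $\#\{V\in P:\,c\succ_V b\}+\#\{V\in P:\,b\succ_V c\}=n$ with the definition $w_P(c,b)=\#\{V:\,c\succ_V b\}-\#\{V:\,b\succ_V c\}$ gives $\#\{V\in P:\,b\succ_V c\}=\tfrac12\bigl(n-w_P(c,b)\bigr)$, hence $\sum_{V\in P}\kendall(V,W)=\tfrac n2\binom m2-\tfrac12\sum_{c\succ_W b}w_P(c,b)$. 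Substituting back, $\Pr(P|W)=\bigl(Z^{-n}\varphi^{\frac n2\binom m2}\bigr)\prod_{c\succ_W b}\varphi^{-w_P(c,b)/2}$; the parenthesized factor depends only on $m,n,\varphi$, so the claimed proportionality follows, and since the sum-exchange step is valid for any antisymmetric total relation, the identical computation applies verbatim to $\mm_\varphi^2$ with $W\in\mb(\mc)$.

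This lemma is essentially a bookkeeping identity, and I do not expect a real obstacle. The one step deserving care is the independence of $Z$ from $W$ in the Condorcet model: the permutation action on $\mb(\mc)$ is \emph{not} transitive, so the quick symmetry argument that works for $\mm_\varphi^1$ does not apply, and one genuinely needs the $\{0,1\}^{\binom m2}$ reformulation (or an equally short direct computation). Every other step is just reindexing a finite sum.
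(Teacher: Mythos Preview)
Your proof is correct and follows essentially the same route as the paper's: both hinge on the identity $\#\{V\in P:\, b\succ_V c\}=(n-w_P(c,b))/2$ and then factor the exponent over pairs $c\succ_W b$. The paper's proof is a one-liner that takes the $W$-independence of $Z$ for granted; your explicit verification of this for $\mm_\varphi^2$ via the Hamming-cube identification of $\mb(\mc)$ is exactly right (and matches the paper's normalization over $\mb(\mc)$, not $\ml(\mc)$).
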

\begin{proof} For any $c\succ_Wb$, the number of times $b\succ c$ in $P$ is $(n-w_P(c,b))/2$, which means that $\Pr(P|W)=\varphi^{n^2(n-1)/4} \prod_{c\succ_W b }\varphi^{-w_P(c,b)/2}$.
\end{proof}

\section{Normative Properties of Bayesian Estimators}

In this section, we compare $f_B^1$, $f_B^2$, and the Kemeny rule (for
alternatives) w.r.t.~various normative properties. We will frequently
use the following lemma, whose proof follows directly from Bayes' rule. We recall that
$\ml_c(\mc)$ is the set of all linear orders where $c$ is ranked in
the top, and $\mb_c(\mc)$ is the set of binary relations in $\mb(\mc)$
where $c$ is ranked in the top.

\begin{lemma}\label{lem:cal} In $\mf_\varphi^1$ under
the uniform prior, for any profile $P$ and any $c,b\in\mc$, $R_B(P,c)\leq R_B(P,b)$ if and only if $\sum_{V\in\ml_c(\mc)}\Pr(P|V)\geq \sum_{V\in\ml_b(\mc)}\Pr(P|V)$.

In $\mf_\varphi^2$ under
the uniform prior, for any profile $P$ and any $c,b\in\mc$, $R_B(P,c)\leq R_B(P,b)$ if and only if $\sum_{V\in\mb_c(\mc)}\Pr(P|V)\geq \sum_{V\in\mb_b(\mc)}\Pr(P|V)$.
\end{lemma}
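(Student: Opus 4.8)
The plan is to unwind the definition of the Bayesian risk for a singleton decision and then invoke the fact that under a uniform prior on a finite parameter space the posterior is proportional to the likelihood. Identifying the decision $c\in\mc$ with the singleton $\{c\}\in 2^\mc\setminus\emptyset$, the averaging in $L_{top}(\theta,C)=\sum_{c\in C}L_{top}(\theta,c)/|C|$ collapses to $L_{top}(\theta,c)$, and by definition $L_{top}(\theta,c)=0$ exactly when $c$ is ranked above every other alternative in $\theta$, i.e.\ exactly when $\theta\in\ml_c(\mc)$ in $\mf_\varphi^1$ (resp.\ $\theta\in\mb_c(\mc)$ in $\mf_\varphi^2$), and $L_{top}(\theta,c)=1$ otherwise. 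Hence, in $\mf_\varphi^1$,
\[
R_B(P,c)=E_{\theta|P}L_{top}(\theta,c)=1-\Pr(\theta\in\ml_c(\mc)\,|\,P),
\]
with $\mb_c(\mc)$ in place of $\ml_c(\mc)$ in $\mf_\varphi^2$, and the analogous identity holds for $b$.

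First I would record that the parameter space $\Theta$ ($=\ml(\mc)$ for $\mf_\varphi^1$, $=\mb(\mc)$ for $\mf_\varphi^2$) is finite, so the uniform prior is well-defined and Bayes' rule gives, for every $V\in\Theta$, $\Pr(\theta=V\,|\,P)=\Pr(P|V)/N(P)$, where $N(P)=\sum_{U\in\Theta}\Pr(P|U)$. Summing this over $V\in\ml_c(\mc)$ (resp.\ $V\in\mb_c(\mc)$) and substituting into the identity above yields
\[
R_B(P,c)=1-\frac{1}{N(P)}\sum_{V\in\ml_c(\mc)}\Pr(P|V),
\]
and likewise for $b$. Since $N(P)>0$ is a constant that does not depend on the decision, subtracting $1$ and clearing the positive denominator shows that $R_B(P,c)\le R_B(P,b)$ holds if and only if $\sum_{V\in\ml_c(\mc)}\Pr(P|V)\ge\sum_{V\in\ml_b(\mc)}\Pr(P|V)$. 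The argument for $\mf_\varphi^2$ is word-for-word the same, with $\mb_c(\mc),\mb_b(\mc)$ replacing $\ml_c(\mc),\ml_b(\mc)$.

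I do not expect a genuine obstacle here: the statement is essentially a restatement of Bayes' rule once one observes that minimizing the $L_{top}$-risk of a single alternative is the same as maximizing its posterior probability of being top-ranked. The only point that merits a line of care is that $N(P)$ is strictly positive, so that division by $N(P)$ and its cancellation are legitimate; this is immediate, since in both models every likelihood $\Pr(P|V)=Z^{-n}\varphi^{\sum_{V'\in P}\text{Kendall}(V',V)}$ is a positive power of $\varphi\in(0,1)$ times $Z^{-n}>0$, hence strictly positive.
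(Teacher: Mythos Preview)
Your proof is correct and matches the paper's approach exactly: the paper states that the lemma ``follows directly from Bayes' rule'' and gives no further details, and what you have written is precisely the unwinding of that observation for the specific loss $L_{top}$ under a uniform prior on the finite parameter space.
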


\begin{thm}\label{thm:normative1} For any $\varphi$, $f_B^1$ satisfies anonymity, neutrality, and monotonicity. It does not satisfy majority or
the Condorcet criterion for all $\varphi>\frac{1}{\sqrt 2}$,\footnote{Whether $f_B^1$ satisfies majority and Condorcet criterion for $\varphi\leq\frac{1}{\sqrt 2}$ is an open question.} and it does not satisfy consistency.
\end{thm}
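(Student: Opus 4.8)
The plan is to run everything through the reformulation that follows from Lemma~\ref{lem:cal}: under the uniform prior $R_B(P,c)=1-g_P(c)/\sum_{b}g_P(b)$ where $g_P(c):=\sum_{V\in\ml_c(\mc)}\Pr(P|V)$, and since $L_{top}(\theta,C)$ averages $L_{top}(\theta,\cdot)$ over $C$, the set minimizing the Bayesian risk is the (maximal) estimator $f_B^1(P)=\arg\max_{c\in\mc}g_P(c)$; for concrete computations I would use $g_P(c)\propto\sum_{V\in\ml_c(\mc)}\prod_{x\succ_V y}\varphi^{-w_P(x,y)/2}$ from Lemma~\ref{lem:wmg}. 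Anonymity is then immediate, since $\Pr(P|\theta)$, and hence $g_P$ and $f_B^1$, depends on $P$ only through its multiset of votes. Neutrality follows because $\kendall$ is invariant under relabeling alternatives, so $g_{\sigma P}(\sigma(c))=g_P(c)$ for every permutation $\sigma$ of $\mc$, whence $f_B^1(\sigma P)=\sigma\,f_B^1(P)$ (only the maximal minimizer need be taken to get equivariance; the risk function itself is already anonymous). I expect the main effort to lie in monotonicity and in pinning the counterexamples, especially a uniform-in-$\varphi$ one for consistency.

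For monotonicity I would reduce to one elementary move: let $P'$ be $P$ with $c$ swapped past the alternative $a$ immediately above it in a single vote $V_0$ (a general ``raise $c$'' operation is a composition of such moves, and I would induct on the number of moves). For any $W\in\ml(\mc)$ the pair $\{a,c\}$ is the only comparison differing between $V_0$ and $V_0'$, so $\kendall(V_0',W)-\kendall(V_0,W)$ is $-1$ when $c\succ_W a$ and $+1$ when $a\succ_W c$; hence $\Pr(P'|W)=\varphi^{-1}\Pr(P|W)$ if $c\succ_W a$ and $\Pr(P'|W)=\varphi\,\Pr(P|W)$ if $a\succ_W c$. Every $V\in\ml_c(\mc)$ has $c\succ_V a$, so $g_{P'}(c)=\varphi^{-1}g_P(c)$; for $b\neq c$ and $V\in\ml_b(\mc)$ we get $\Pr(P'|V)\le\varphi^{-1}\Pr(P|V)$ in both cases because $\varphi<1$ forces $\varphi\le\varphi^{-1}$, so $g_{P'}(b)\le\varphi^{-1}g_P(b)$. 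Therefore, if $c\in f_B^1(P)$, i.e.\ $g_P(c)\ge g_P(b)$ for all $b$, then $g_{P'}(c)=\varphi^{-1}g_P(c)\ge\varphi^{-1}g_P(b)\ge g_{P'}(b)$, so $c\in f_B^1(P')$. The crux is exactly this asymmetry: raising $c$ multiplies the top-$c$ orders by $\varphi^{-1}$ but multiplies any other alternative's orders by at most $\varphi^{-1}$.

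For the failure of the majority and Condorcet criteria I would use the family on $\mc=\{a,b,c\}$ given, for odd $n$, by $\tfrac{n+1}{2}$ copies of $c\succ a\succ b$ and $\tfrac{n-1}{2}$ copies of $a\succ b\succ c$. Here $c$ tops a strict majority of the votes, so $c$ is both the majority and the Condorcet winner, and the $\wmg$ has $w(c,a)=w(c,b)=1$, $w(a,b)=n$. Lemma~\ref{lem:wmg} gives $g(c)\propto\varphi^{-1}(\varphi^{-n/2}+\varphi^{n/2})$ and $g(a)\propto\varphi^{-n/2}(1+\varphi)$, so $g(a)>g(c)$ exactly when $\varphi+\varphi^{2}>1+\varphi^{n}$. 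Since $\varphi+\varphi^{2}>1$ for every $\varphi>1/\sqrt2$, taking $n$ large enough makes $\varphi^{n}<\varphi+\varphi^{2}-1$, and then $c\notin f_B^1$ of this profile, contradicting both criteria; checking which $\varphi$ admit such an $n$ is the elementary arithmetic behind the $1/\sqrt2$ threshold. (One also checks $b$ is far behind, so $f_B^1$ here equals $\{a\}$.)

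For consistency I would argue through homogeneity. If $f_B^1$ were consistent then, applying the axiom to repeated copies, $f_B^1(k\!\cdot\!P)=f_B^1(P)$ for every $k\ge1$; but $g_{k\cdot P}(c)=\sum_{V\in\ml_c(\mc)}\Pr(P|V)^{k}$, so as $k\to\infty$ the largest term dominates and $f_B^1(k\!\cdot\!P)$ converges to the set of top alternatives of the modal (maximum-likelihood) linear orders of $P$, i.e.\ to the Kemeny winners. It then suffices to exhibit, for the given $\varphi$, a profile whose modal order is unique with some top alternative not selected by $f_B^1$; for $\varphi>1/\sqrt2$ the family above already works, since its unique modal order is $c\succ a\succ b$ while $f_B^1$ selects $\{a\}$. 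The main obstacle is covering the remaining $\varphi$ uniformly: $f_B^1$ degenerates toward the Borda winner as $\varphi\to1$ (which is consistent) and toward Kemeny as $\varphi\to0$, so there the witnessing profiles must have $\wmg$ weights growing with $\varphi$; concretely I would supply a pair $P_1,P_2$ on three or four alternatives with $f_B^1(P_1)=f_B^1(P_2)=\{c\}$ but $f_B^1(P_1\cup P_2)\neq\{c\}$, exploiting that although $\Pr(P_1\cup P_2|V)=\Pr(P_1|V)\Pr(P_2|V)$, the winner is decided by $g$, which sums over the $(m-1)!$ top-$c$ orders and hence is not additive over votes, so $f_B^1$ is not a scoring rule. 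Verifying the requisite open inequalities for such $P_1,P_2$ is the only genuine computation left.
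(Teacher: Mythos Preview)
Your treatment of anonymity, neutrality, and monotonicity is essentially the paper's argument; your elementary-swap reduction is exactly the content of the paper's Lemma~\ref{lem:raise}, stated a bit more carefully.

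For the Condorcet/majority failure you take a genuinely different and in fact simpler route. The paper builds a family on $m$ alternatives and needs the double limit $m\to\infty$, $k\to\infty$ to push the ratio below $1$ for every $\varphi>1/\sqrt 2$ (at $m=3$ their construction only works for $\varphi^2+\varphi^4>1$, i.e.\ $\varphi\gtrsim 0.786$). Your three-alternative profile with margins $w(c,a)=w(c,b)=1$ and $w(a,b)=n$ gives $g(a)/g(c)=\varphi(1+\varphi)/(1+\varphi^{n})$, so the criterion fails whenever $\varphi+\varphi^{2}>1$, i.e.\ already for $\varphi>(\sqrt 5-1)/2\approx 0.618$; this is strictly stronger than what the theorem asserts and avoids the $m\to\infty$ machinery entirely. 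The trade-off is only that the paper's construction reuses the same profile to handle $f_B^2$ in the proof of Theorem~\ref{thm:normative2}.

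For consistency, however, there is a real gap. Your homogeneity argument is valid and elegant: if $f_B^1$ were consistent then $f_B^1(kP)=f_B^1(P)$ for all $k$, while $g_{kP}(c)=\sum_{V\in\ml_c}\Pr(P|V)^{k}$ forces $f_B^1(kP)$ to equal the Kemeny winners for all large $k$; so any profile with $f_B^1(P)\neq\kemeny_\mc(P)$ refutes consistency. Your Condorcet example supplies such a profile, but only for $\varphi>(\sqrt 5-1)/2$. The theorem asserts failure of consistency for \emph{every} $\varphi\in(0,1)$, and for small $\varphi$ the rule $f_B^1$ is close to Kemeny (the maximum term dominates the sum), so your indirect strategy needs a separate witness there; you acknowledge this but do not supply it. The paper instead gives a direct, $\varphi$-uniform construction on four alternatives: two profiles $P_1,P_2$ whose WMGs are arranged so that $g_{P_i}(c)/g_{P_i}(b)=3(1+\varphi^{4k})/[2(1+\varphi^{2k}+\varphi^{4k})]>1$ for all $\varphi$ and $k$, giving $f_B^1(P_i)=\{c\}$, while $P_1\cup P_2$ is symmetric in $c,b$, forcing $f_B^1(P_1\cup P_2)=\{c,b\}$. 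If you want to keep your homogeneity reduction, you still need to exhibit, for each small $\varphi$, a profile where $f_B^1$ disagrees with Kemeny; otherwise adopting a direct two-profile construction as the paper does closes the argument uniformly.
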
 
\begin{proof} Anonymity and neutrality are obviously satisfied. 

\noindent{\bf Monotonicity.} Suppose $c\in f_B^1(P)$. To prove that $f_B^1$ satisfies monotonicity, it suffices to prove that for any profile $P'$ obtained from $P$ by raising the position of $c$ in one vote, $c\in f_B^1(P')$. We first prove the following lemma.

\begin{lemma}\label{lem:raise} For any $c\in \mc$, let  $P'$ denote a profile obtained from $P$ by raising the position of $c$ in one vote.
For any $W\in \ml_c(\mc)$, $\pr(P'|W)=\pr(P|W)/\varphi$; for any $b\in\mc$ and any $V\in  \ml_b(\mc)$, $\pr(P'|V)\leq \pr(P|V)/\varphi$. For any $W'\in \mb_c(\ml)$, $\pr(P'|W')\leq \pr(P|W')/\varphi$; for any $b\in\mc$ and any $V'\in  \mb_b(\mc)$, $\pr(P'|V')\leq \pr(P|V')/\varphi$. 
 \end{lemma}
\begin{proof} For $W\in \ml_c(\mc)$, the lemma holds because $\kendall(P',W)=\kendall(P,W)-1$, and for $V\in  \ml_b(\mc)$, the lemma holds because $\kendall(P',V)\geq \kendall(P,V)-1$. The proof for $\mb_c$ and $\mb_b$ is similar.
\end{proof} 

Therefore, for any $b\neq c$,  by Lemma~\ref{lem:raise}, we have $\sum_{W\in\ml_c(\mc)}\Pr(P'|W)=\sum_{W\in\ml_c(\mc)}\Pr(P|W)/\varphi\geq \sum_{V\in\ml_b(\mc)}\Pr(P|V)/\varphi\geq \sum_{V\in\ml_b(\mc)}\Pr(P'|V)$, which proves that $c\in f_B^1(P')$ following Lemma~\ref{lem:cal}.

\noindent{\bf Majority and the Condorcet criterion.} Let $\mc=\{c,b,c_3,\ldots,c_m\}$. We construct a profile $P^*$ where $c$ is ranked in the top positions for more than half of the votes, which means that $c$ is the Condorcet winner, but $c\not\in f_B^1(P)$.

For any $k$, let $P^*$ denote a profile composed of  $k+1$ copies of $[c\succ b\succ c_3\succ \cdots \succ c_m]$ and  $k-1$ copies of $[ b\succ c_3\succ \cdots \succ c_m\succ c]$. It is not hard to verify that the WMG of $P^*$ is as in Figure~\ref{fig:condorcet}.\begin{figure}[htp]
\centering
\includegraphics[trim=0cm 15.5cm 15cm 0cm, clip=true, width=.4\textwidth]{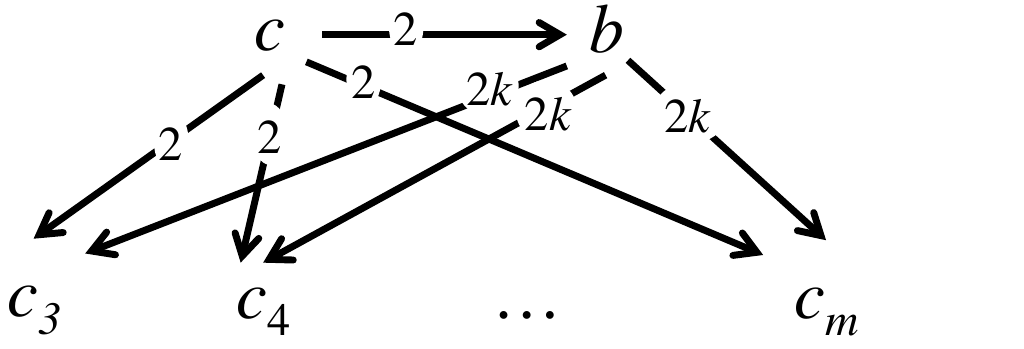}
\caption{\small The WMG of the profile $P^*$ where only positive edges are shown.\label{fig:condorcet}}
\end{figure}
\begin{lemma}\label{lem:ratio} $\frac{\sum_{V\in\ml_c(\mc)}\Pr(P^*|V)}{\sum_{W\in\ml_b(\mc)}\Pr(P^*|W)}=\frac{1+\varphi^{2k}+\cdots+\varphi^{2k(m-2)}}{1+\varphi^{2}+\cdots+\varphi^{2(m-2)}}\cdot\frac{1}{\varphi^2}$
\end{lemma}
\begin{proof} Let $\ml_{-c}=\ml-\{c\}$ and let $P|_{-c}$ denote the profile where $c$ is removed from all rankings.
\begin{align}
\sum_{V\in\ml_c(\mc)}\Pr(P^*|V)
\propto &\varphi^{-m+1}\sum_{V'\in\ml(\mc_{-c})}\varphi^{\kendall(P|_{{-c}},V')}\notag\\
\propto &\varphi^{-m+1}\sum_{V'\in\ml(\mc_{-c})}\prod_{a,d\in\mc-\{c\}: a\succ_{V'} d}\varphi^{-w_{P^*}(a,d)/2}\notag\\
\propto &\varphi^{-m+1}\sum_{t=0}^{m-2}{m-2\choose t}t!(m-2-t)!\varphi^{kt}\varphi^{-k(m-2-t)}\label{equ:comb}\\
\propto &\varphi^{-(m-2)k-m+1}\sum_{t=0}^{m-2}\varphi^{2kt}\notag
\end{align}

In (\ref{equ:comb}), $t$ is the number of alternatives in $\{c_3,\ldots,c_m\}$ ranked above $b$ in  $V'$. There are ${m-2\choose t}$ such combinations, for each of which there are $t!$ rankings among alternatives ranked above $b$ and $(m-2-t)!$ rankings among alternatives ranked below $t$. Notice that there are no edges between alternatives in $\mc-\{c,b\}$ in the WMG, which means that for any $V'$ where exactly $t$ alternatives are ranked above $b$, the probability is proportional to $\varphi^{kt}\varphi^{-k(m-2-t)}$ by Lemma~\ref{lem:wmg}. 
Similarly, $\sum_{V\in\ml_b(\mc)}\Pr(P^*|V)\propto\varphi^{-k(m-2)+1-(m-2)}\sum_{t=0}^{m-2}\varphi^{2t}$.
\end{proof}
\smallskip

Since
$\lim_{m\ra\infty}\lim_{k\ra\infty}\frac{1+\varphi^{2k}+\cdots+\varphi^{2k(m-2)}}{1+\varphi^{2}+\cdots+\varphi^{2(m-2)}}\cdot\frac{1}{\varphi^2}=\frac{1-\varphi^2}{\varphi^2}$,
for any $\varphi>\frac{1}{\sqrt 2}$, we can choose $m$ and $k$ so that
$\frac{\sum_{V\in\ml_c(\mc)}\Pr(P|V)}{\sum_{W\in\ml_b(\mc)}\Pr(P|W)}<1$. By
Lemma~\ref{lem:ratio}, $c$ is the
Condorcet winner in $P^*$ but it does not minimize the Bayesian risk
under $\mm_\varphi^1$, which means that it is not a winner under
$f_B^1$.

\noindent{\bf Consistency.} We construct an example to show that $f_B^1$ does not satisfy consistency.  In our construction $m$ and $n$ are even, and $\mc=\{c,b,c_3,c_4\}$. Let $P_1$ and $P_2$ denote profiles whose WMGs are as shown in Figure~\ref{fig:consistency}, respectively.
%
\begin{figure}[htp]
\centering
\begin{tabular}{ccc}
\includegraphics[trim=0cm 15.5cm 19cm 0cm, clip=true, width=.2\textwidth]{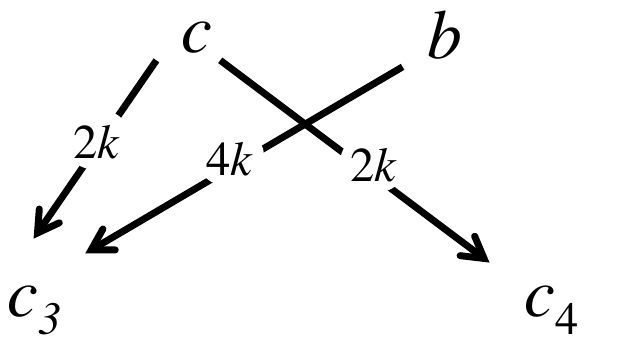}&\includegraphics[trim=0cm 15.5cm 19cm 0cm, clip=true, width=.2\textwidth]{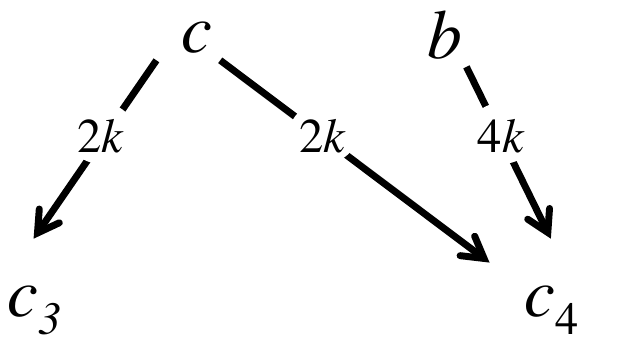}&\includegraphics[trim=0cm 15.5cm 19cm 0cm, clip=true, width=.2\textwidth]{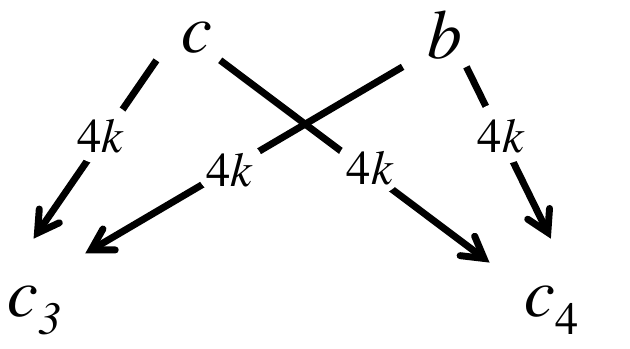}\\
$P_1$&${P_2}$&$P_1\cup P_2$
\end{tabular}
\caption{\small The WMGs of $P_1$, $P_2$, and $P_1\cup P_2$. Only positive edges are shown.\label{fig:consistency}}
\end{figure}

We provide the following lemma to compare the Bayesian risk of $c$ and $d$.
The proof is similar to the proof of Lemma~\ref{lem:ratio}.
\begin{lemma}\label{lem:consistency} Let $P\in\{P_1,P_2\}$, $\frac{\sum_{V\in\ml_c(\mc)}\Pr(P|V)}{\sum_{W\in\ml_b(\mc)}\Pr(P|W)}=\frac{3(1+\varphi^{4k})}{2(1+\varphi^{2k}+\varphi^{4k})}$
\end{lemma}
\begin{proof} Let $P=P_1$ or $P_2$.
\begin{align}
\sum_{V\in\ml_c(\mc)}\Pr(P|V)
\propto &\varphi^{-2k}\sum_{V'\in\ml(\mc_{-c})}\varphi^{\kendall(P|_{\mc_{-c}},V')}\notag\\
\propto &\varphi^{-2k}3(\varphi^{-2k}+\varphi^{2k})\notag
\end{align}
Similarly $\sum_{V\in\ml_b(\mc)}\Pr(P|V)\propto\varphi^{-2k}2(\varphi^{-2k}+1+\varphi^{2k})$. \end{proof}

For any $0<\varphi<1$, $\frac{3(1+\varphi^{4k})}{2(1+\varphi^{2k}+\varphi^{4k})}>1$ for all $k$. It is not hard to verify that  $f_B^1(P_1)=f_B^1(P_2)=\{c\}$. However, it is not hard to verify that $f_B^1(P_1\cup P_2)=\{c,b\}$, which means that $f_B^1$ is not consistent.
This completes the proof of the theorem.
\end{proof}

\begin{thm}\label{thm:normative2} For any $\varphi$, $f_B^2$ satisfies anonymity, neutrality, and monotonicity. It does not satisfy majority,
the Condorcet criterion, or consistency.
\end{thm}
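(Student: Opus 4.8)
The plan is to reuse the scaffolding of the proof of Theorem~\ref{thm:normative1}. Anonymity and neutrality are immediate from the symmetry of $L_{top}$, of the uniform prior, and of $\mm_\varphi^2$. For monotonicity I would argue exactly as for $f_B^1$: by iterating single swaps it suffices to consider a profile $P'$ obtained from $P$ by raising $c$ by one position in a single vote, so that the only pairwise comparison that changes is one in favour of $c$. Then $\Pr(P'|W')=\Pr(P|W')/\varphi$ for every $W'\in\mb_c(\mc)$ (the changed comparison is one $W'$ agrees with, so its Kendall distance drops by exactly $1$), and $\Pr(P'|V')\le\Pr(P|V')/\varphi$ for every $b\ne c$ and $V'\in\mb_b(\mc)$ by Lemma~\ref{lem:raise}. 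Summing over $\mb_c(\mc)$ and over $\mb_b(\mc)$ and invoking the $\mb$-part of Lemma~\ref{lem:cal}, applied first to $P$ and then to $P'$, yields $c\in f_B^2(P)\Rightarrow c\in f_B^2(P')$.

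The new ingredient for the three negative results is a closed form for comparing the Bayesian risks of two alternatives, which is cleaner than in the Mallows case because a parameter in $\mb(\mc)$ orients each pair of alternatives independently. Summing the expression of Lemma~\ref{lem:wmg} over $\mb_c(\mc)$ factorises as
\[
\sum_{W\in\mb_c(\mc)}\Pr(P|W)\ \propto\ \Big(\prod_{a\ne c}\varphi^{-w_P(c,a)/2}\Big)\prod_{\{a,d\}\subseteq\mc\setminus\{c\}}\big(\varphi^{-w_P(a,d)/2}+\varphi^{w_P(a,d)/2}\big),
\]
with a proportionality constant not depending on $c$ (the summand in the last product is even in $w_P(a,d)$, so it is well defined on unordered pairs). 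Dividing the $c$- and $b$-versions and cancelling the common pair-factors gives, for all $c,b\in\mc$ and all profiles $P$,
\[
\frac{\sum_{W\in\mb_c(\mc)}\Pr(P|W)}{\sum_{W\in\mb_b(\mc)}\Pr(P|W)}\ =\ \varphi^{-w_P(c,b)}\prod_{a\in\mc\setminus\{b,c\}}\frac{1+\varphi^{w_P(b,a)}}{1+\varphi^{w_P(c,a)}},
\]
so by Lemma~\ref{lem:cal} (and since $f_B^2$ returns the alternatives of minimum Bayesian risk) we have $c\in f_B^2(P)$ iff this ratio is $\ge 1$ for every $b$.

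For majority and the Condorcet criterion I would reuse the profile $P^*$ from the proof of Theorem~\ref{thm:normative1}: $k{+}1$ copies of $[c\succ b\succ c_3\succ\cdots\succ c_m]$ and $k{-}1$ copies of $[b\succ c_3\succ\cdots\succ c_m\succ c]$, for which $c$ heads a strict majority of the $2k$ votes (hence is the majority and Condorcet winner) and $w_{P^*}(c,b)=w_{P^*}(c,c_i)=2$, $w_{P^*}(b,c_i)=2k$. The identity above then gives $\dfrac{\sum_{W\in\mb_c(\mc)}\Pr(P^*|W)}{\sum_{W\in\mb_b(\mc)}\Pr(P^*|W)}=\varphi^{-2}\Big(\dfrac{1+\varphi^{2k}}{1+\varphi^2}\Big)^{m-2}$, whose value tends to $0$ as first $k\to\infty$ and then $m\to\infty$, since $1+\varphi^2>1$ for every $\varphi\in(0,1)$. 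Hence for suitable $m,k$ the ratio is $<1$, so $R_B(P^*,b)<R_B(P^*,c)$ and $c\notin f_B^2(P^*)$, contradicting both criteria — and, unlike for $f_B^1$, this works for every $\varphi$, the point being that the limiting factor here is $1/(1+\varphi^2)<1$ rather than $(1-\varphi^2)/\varphi^2$.

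For consistency I would construct $P_1,P_2$ over $\mc=\{c,b,c_3,c_4\}$, along the lines of Figure~\ref{fig:consistency} with a scale parameter $k$, whose WMGs make $c$ — via the ratio identity — the unique risk-minimiser under each of $P_1$ and $P_2$ (all three ratios against $b,c_3,c_4$ strictly above $1$ for every $\varphi\in(0,1)$), while the additivity $w_{P_1\cup P_2}=w_{P_1}+w_{P_2}$, together with the non-linearity of $w\mapsto\log(1+\varphi^{w})$, pulls the $c$-vs-$b$ ratio for $P_1\cup P_2$ down to at most $1$, so that $\{c\}\subsetneq f_B^2(P_1\cup P_2)$, violating consistency. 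The first three parts are short once the factorisation identity is available; the step I expect to be the main obstacle is the consistency construction: exhibiting WMGs realisable by integer profiles that keep $c$ a strict winner simultaneously in $P_1$ and $P_2$ yet fail to combine additively, and then checking the resulting one-variable inequalities over the whole range $0<\varphi<1$.
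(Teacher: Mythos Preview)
Your approach matches the paper's. The ratio identity you derive by factorising the sum over $\mb_c(\mc)$ is exactly the content of Theorem~\ref{thm:comp} (the closed form $1-R_B(P,c)=\prod_{b\ne c}\frac{1}{1+\varphi^{w_P(c,b)}}$), which the paper proves in the next section and simply invokes here; your direct route via Lemma~\ref{lem:wmg} is the same computation. Anonymity, neutrality, monotonicity, and the majority/Condorcet counterexample are then handled identically, and your expression $\varphi^{-2}\big(\tfrac{1+\varphi^{2k}}{1+\varphi^2}\big)^{m-2}$ coincides with the paper's, since $\tfrac{1+\varphi^{-2}}{1+\varphi^2}=\varphi^{-2}$.

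The one place you stay tentative --- consistency --- is not an obstacle: the paper simply reuses the \emph{same} $P_1,P_2$ from Figure~\ref{fig:consistency}. Plugging their WMG weights into your ratio identity gives, for $P\in\{P_1,P_2\}$,
\[
\frac{\sum_{W\in\mb_c(\mc)}\Pr(P|W)}{\sum_{W\in\mb_b(\mc)}\Pr(P|W)}=\frac{2(1+\varphi^{4k})}{(1+\varphi^{2k})^2},
\]
and since $2(1+x^2)-(1+x)^2=(1-x)^2>0$ for $x=\varphi^{2k}\in(0,1)$ this exceeds~$1$ for every $\varphi$; the ratios of $c$ against $c_3,c_4$ are easily checked to exceed~$1$ too, so $f_B^2(P_1)=f_B^2(P_2)=\{c\}$. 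In $P_1\cup P_2$ the WMG becomes symmetric in $c$ and $b$, whence $\{c,b\}\subseteq f_B^2(P_1\cup P_2)$, and consistency fails. No new construction or realisability check is needed beyond what was already done for Theorem~\ref{thm:normative1}.
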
 
\begin{proof} 
Anonymity and neutrality are obvious. The proof for monotonicity is similar to the proof for $f_B^1$ and uses the second part of Lemma~\ref{lem:raise}.

{\noindent\bf Majority and Condorcet criterion.} We prove that $f_B^2$ does not satisfy majority or the Condorcet criterion for the same profile $P^*$ as used
in the proof of Theorem~\ref{thm:normative1}.  By Theorem~\ref{thm:comp} in the next section, we have:
\begin{align}
\frac{\sum_{V\in\mb_c(\mc)}\Pr(P^*|V)}{\sum_{W\in\mb_b(\mc)}\Pr(P^*|W)}=\frac{(\frac{1}{1+\varphi^2})^{m-1}}{(\frac{1}{1+\varphi^{2k}})^{m-2}(\frac{1}{1+\varphi^{-2}})}
=(\frac{1+\varphi^{2k}}{1+\varphi^2})^{m-2}\cdot\frac{1+\varphi^{-2}}{1+\varphi^{2}}\label{equ:f2condorcet}
\end{align}
For any $k\ge 2$, there exits $m$ such that (\ref{equ:f2condorcet})$<1$, which means that the Condorcet winner $c$ is not in $f_B^2(P^*)$.

{\noindent\bf Consistency.} We use the same profiles $P_1$ and $P_2$ as in the proof of Theorem~\ref{thm:normative1} (see Figure~\ref{fig:consistency}). For $P=P_1$ or $P_2$, we have:
\begin{align}
&\frac{\sum_{V\in\mb_c(\mc)}\Pr(P|V)}{\sum_{W\in\mb_b(\mc)}\Pr(P|W)}=\frac{(\frac{1}{1+1})(\frac{1}{1+\varphi^{2k}})^{2}}{(\frac{1}{1+1})^2(\frac{1}{1+\varphi^{4k}})}
=\frac{2(1+\varphi^{4k})}{(1+\varphi^{2k})^2}\label{equ:f2consistency}
\end{align}
For any $k$ and $m$, we have
that the value of (\ref{equ:f2consistency}) is strictly
greater than 1. 
It is not hard to verify that  $f_B^2(P_1)=f_B^2(P_2)=\{c\}$
and $f_B^2(P_1\cup P_2)=\{c,d\}$, which means that $f_B^2$ is not consistent.
\end{proof}

By Theorem~\ref{thm:normative1} and~\ref{thm:normative2}, $f_B^1$ and $f_B^2$  do not satisfy as many desired normative properties as the Kemeny rule (for winners). On the other hand, they minimize Bayesian risk under $\mf_\varphi^1$ and $\mf_\varphi^2$, respectively, for which Kemeny does neither. In addition, neither $f_B^1$ nor $f_B^2$ satisfy consistency, which means that they are not positional scoring rules.

\section{Computational Complexity}

We consider the following two types of 
decision problems.
\begin{dfn} In the {\sc better Bayesian decision} problem for a statistical decision-theoretic framework $(\mm_\mc,\md,L)$ under a prior distribution, we are given $d_1,d_2\in\md$, and a profile $P$. We are asked whether $R_B(P,d_1)\leq R_B(P,d_2)$.
\end{dfn}

We are also interested in checking whether a given alternative is the
optimal decision.
\begin{dfn} In the {\sc optimal Bayesian decision} problem  for a statistical decision-theoretic framework $(\mm_\mc,\md,L)$ under a prior distribution, we are given $d\in\md$ and a profile $P$. We are asked whether $d$ minimizes the Bayesian risk $R_B(P,\cdot)$.
\end{dfn}

${\sf P}_{||}^{\sf NP}$ is the class of decision problems that can be computed by a {\em P} oracle machine with polynomial number of parallel calls to an {\sf NP} oracle. A decision problem $A$ is ${\sf P}_{||}^{\sf NP}$-hard, if for any ${\sf P}_{||}^{\sf NP}$ problem $B$, there exists a polynomial-time many-one reduction from $B$ to $A$. It is known that ${\sf P}_{||}^{\sf NP}$-hard problems are ${\sf NP}$-hard.
\begin{thm}\label{thm:compmodel1} For any $\varphi$, {\sc better Bayesian decision} and {\sc optimal Bayesian decision} for $\mf_\varphi^1$ under uniform prior are ${\sf P}_{||}^{\sf NP}$-hard.
\end{thm}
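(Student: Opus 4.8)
\medskip
\noindent\textbf{Proof proposal.} The plan is to reduce from \textsc{Kemeny Winner} --- given a profile $Q$ and an alternative $a$, decide whether $a$ is the top alternative of some Kemeny consensus of $Q$ --- which is known to be ${\sf P}_{||}^{\sf NP}$-complete. First I would rewrite the quantity to be decided in a convenient form: under the uniform prior, $R_B(P,c)=1-\frac{1}{Z'}\sum_{V\in\ml_c(\mc)}\pr(P|V)$, where $Z'=\sum_{V\in\ml(\mc)}\pr(P|V)$ is independent of $c$, and by Lemma~\ref{lem:wmg} each $\pr(P|V)$ equals a $V$-independent constant times $\varphi^{-\mathrm{ag}_P(V)/2}$, with $\mathrm{ag}_P(V)=\sum_{x\succ_V y}w_P(x,y)$ the agreement of $V$ with $\wmg(P)$, maximized exactly over the Kemeny consensus orders of $P$. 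Since $0<\varphi<1$, the sum $\sum_{V\in\ml_c(\mc)}\varphi^{-\mathrm{ag}_P(V)/2}$ is governed by its largest term $\varphi^{-S_c/2}$ with $S_c=\max_{V\in\ml_c(\mc)}\mathrm{ag}_P(V)$; because there are at most $(m-1)!$ terms and $\mathrm{ag}_P$ is integer-valued, after replacing a profile by a polynomial number of copies of itself (which scales up all gaps between distinct values of $\mathrm{ag}_P$), for any two alternatives whose $S$-values are sufficiently far apart the comparison ``$R_B(P,c)\le R_B(P,c')$'' matches ``$S_c\ge S_{c'}$'', and in particular $R_B(P,c)$ is close to $0$ when $S_c$ is the strict maximum over $\mc$ and close to $1$ otherwise.

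\smallskip
Given $(Q,a)$, the reduction outputs the profile $P$ obtained from $Q$ by taking a polynomially large number of copies and then appending a small perturbation --- itself realizable as a profile over the same alternatives by a McGarvey-type construction with consistent edge-weight parity --- tuned so that $P$ has a unique Kemeny consensus whose top alternative is $a$ precisely when $a$ is the top of some Kemeny consensus of $Q$: the perturbation breaks the tie among $Q$'s Kemeny winners in favour of $a$, but is too small to lift any original non-winner above the winners. Consequently $S_a$ is the strict maximum over $\mc$ if and only if $a\in\kemeny_\mc(Q)$. For \textsc{optimal Bayesian decision} I output $(P,\{a\})$: since $R_B(P,C)=\frac{1}{|C|}\sum_{c\in C}R_B(P,c)$ is minimized over $2^\mc\setminus\emptyset$ exactly by the sets of $R_B$-minimal singletons, $\{a\}$ is optimal iff $a$ minimizes $R_B(P,\cdot)$ over $\mc$, iff $S_a$ is maximal, iff $a\in\kemeny_\mc(Q)$. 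For \textsc{better Bayesian decision} I output $(P,\{a\},\mc\setminus\{a\})$ and use $R_B(P,\mc\setminus\{a\})=\frac{1}{m-1}\sum_{c\neq a}R_B(P,c)$: if $a\in\kemeny_\mc(Q)$, then for a large enough blow-up $R_B(P,a)$ is near $0$ while $R_B(P,c)$ is near $1$ for every $c\neq a$, so $R_B(P,\{a\})<R_B(P,\mc\setminus\{a\})$; if $a\notin\kemeny_\mc(Q)$, then $R_B(P,a)$ is near $1$ while $\mc\setminus\{a\}$ still contains the unique consensus top, whose risk is near $0$, so $R_B(P,\mc\setminus\{a\})$ is near $\frac{m-2}{m-1}<1$ and the inequality fails. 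Both maps run in polynomial time.

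\smallskip
I expect the main obstacle to be the perturbation gadget of the second paragraph, which must simultaneously (i) be realizable by polynomially many extra votes with the parity constraints met, (ii) be large enough to make the Kemeny consensus of $P$ unique and to steer any original tie toward $a$, separating $S_a$ from every other $S_c$ by more than $2\ln((m-1)!)/\ln(1/\varphi)$ so that the leading-term estimate of the first paragraph is valid, and (iii) be small enough relative to the preceding blow-up that it cannot promote a non-winner of $Q$. Making the perturbation magnitude and the number of copies satisfy (i)--(iii) at once --- all polynomial in $m$ for each fixed $\varphi\in(0,1)$ --- and checking that the leading-term domination holds uniformly in the instance size, is the delicate part; the remainder is bookkeeping with Lemmas~\ref{lem:cal} and~\ref{lem:wmg} plus geometric-series bounds.
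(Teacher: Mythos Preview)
Your plan is sound and would give a correct proof, and it shares the paper's two load-bearing ideas: reduce from \textsc{Kemeny Winner}, and make the Bayesian-risk comparison collapse to a Kemeny-score comparison by taking enough copies of the profile (your leading-term estimate is exactly the content of the paper's Lemma~\ref{lem:kemeny}). Where you diverge is in the gadget. The paper does \emph{not} perturb the input profile over the same alternative set; instead it adjoins two fresh alternatives $a,b$ to the given alternatives $\mc'$, builds a profile $P'$ whose WMG contains a scaled copy of $\wmg(Q)$ together with carefully weighted edges $a\!\to\!\mc'$, $b\!\to\!(\mc'\setminus\{c\})$, $c\!\to\! b$, and $b\!\to\! a$, and then takes $t$ copies so that $\varphi^t<1/m!$. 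In this construction the unique Kemeny-score minimizer among \emph{all} alternatives is $a$ when $c\in\kemeny_\mc(Q)$ and $b$ otherwise, so \textsc{better Bayesian decision} is answered by comparing the two singletons $\{a\}$ and $\{b\}$, and \textsc{optimal Bayesian decision} by asking whether $\{a\}$ is optimal. This buys simplicity: there is no separate perturbation stage and no delicate two-scale balance to tune; the roles of $a$ and $b$ are fixed by construction regardless of ties inside $Q$. Your route keeps the alternative set unchanged and instead tilts the WMG toward $a$ after a large blow-up, which works (your constraints (i)--(iii) can all be met with a polynomial blow-up and an even McGarvey perturbation of size $\Theta(\ln((m-1)!)/\ln(1/\varphi))$), and your choice of $\{a\}$ versus $\mc\setminus\{a\}$ for \textsc{better Bayesian decision} is a nice touch since it reduces to the clean threshold $\Pr(\text{top}=a\mid P)\ge 1/m$. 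The trade-off is that your argument carries more bookkeeping (the perturbation's effect on every $S_c$, parity, and the interplay between the two scales), whereas the paper's auxiliary-alternative gadget sidesteps all of that at the cost of enlarging $\mc$ by two.
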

\begin{proof} The hardness of both problems is proved by a unified polynomial-time many-one reduction from the {\sc Kemeny winner} problem, which was proved to be ${\sf P}_{||}^{\sf NP}$-complete by~\citet{Hemaspaandra05:Complexity}. In a {\sc Kemeny winner} instance, we are given a profile and an alternative $c$, and we are asked if $c$ is ranked in the top of at least one $V\in\ml(\mc)$ that minimizes $\kendall(P,V)$.

For any alternative $c$, the {\em Kemeny score} of $c$ under $\mm_\varphi^1$ is the smallest distance between the profile $P$ and any linear order where $c$ is ranked in the top. We prove that when $\varphi<\frac{1}{m!}$, the Bayesian risk of $c$ is largely determined by the Kemeny score of $c$:
\begin{lemma}\label{lem:kemeny} For any $\varphi<\frac{1}{m!}$ and $c,b\in\mc$, if the Kemeny score of $c$ is strictly smaller than the Kemeny score of $b$, then $R_B(P,c)<R_B(P,b)$ for $\mm_\varphi^1$.
\end{lemma}
\begin{proof} Let $k_c$ and $k_b$ denote the Kemeny scores of $c$ and $b$, respectively. We have $\sum_{V\in\ml_c(\mc)}\Pr(P|V)>\frac{1}{Z^n}\varphi^{k_c}>\frac{1}{Z^n}m!\varphi^{k_c-1}\geq \sum_{V\in\ml_b(\mc)}\Pr(P|V)$, which means that $R_B(P,c)<R_B(P,b)$ by Lemma~\ref{lem:cal}.
\end{proof}

We note that $\varphi$ may be larger than $\frac{1}{m!}$. In our reduction, we will duplicate the input profile so that effectively we are computing the problems for a small $\varphi$. Let $t$ be any natural number such that $\varphi^t<\frac{1}{m!}$. For any {\sc Kemeny winner} instance $(P,c)$ for alternatives $\mc'$, we add two more alternatives $\{a,b\}$ and define a profile $P'$ whose WMG is as shown in Figure~\ref{fig:hardness} using  McGarvey's trick~\citep{McGarvey53:Theorem}. The WMG of $P'$ contains the $\text{WMG}(P)$ as a subgraph, where the weights are $6$ times of the weights of $\text{WMG}(P)$; for all $c'\in \mc'$, the weight of $a\ra c'$ is $6$; for all $c'\in\mc'-\{c\}$, the weight of $b\ra c'$ is $6$; the weight of $c\ra b$ is $4$ and the weight of $b\ra a$ is $2$.
\begin{figure}[htp]
\centering
\includegraphics[trim=0cm 15.5cm 18.5cm 0cm, clip=true, width=.23\textwidth]{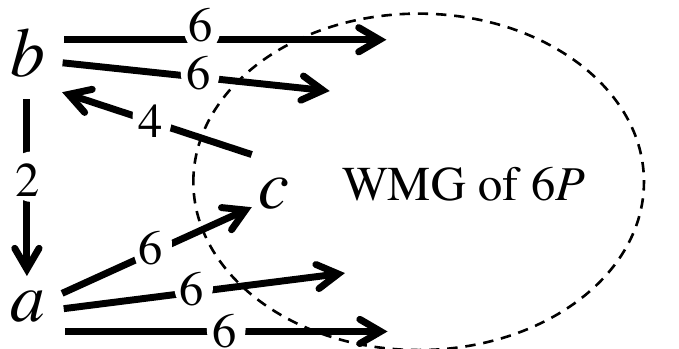}
\caption{\small The WMG of $P'$. $P^*=tP'$.\label{fig:hardness}}
\end{figure}

Then, we let $P^*=tP$, which is $t$ copies of $P$. It follows that for any $V\in\ml(\mc)$, $\Pr(P^*|V,\varphi)=\Pr(P'|V,\varphi^t)$. By Lemma~\ref{lem:kemeny}, if an alternative $e$ has the strictly lowest Kemeny score for profile $P'$, then it the unique alternative that minimizes the Bayesian risk for $P'$ and dispersion parameter $\varphi^t$, which means that $e$ minimizes the Bayesian risk for $P^*$ and dispersion parameter $\varphi$. 

Let $O$ denote the set of linear orders over $\mc'$ that minimizes the Kendall tau distance from $P$ and let $k$ denote this minimum distance. Choose an arbitrary $V'\in O$. Let $V=[b\succ a\succ V']$. It follows that $\kendall(P',V)=4+6k$. If there exists $W'\in O$ where $c$ is ranked in the top position, then we let $W=[a\succ c\succ b\succ (V'-\{c\})]$. We have $\kendall(P',W)=2+6k$. If $c$ is not a Kemeny winner in $P$, then for any $W$ where $b$ is not ranked in the top position, $\kendall(P',W)\geq 6+6k$. Therefore, $a$ minimizes the Bayesian risk if and only if $c$ is a Kemeny winner in $P$, and if $c$ does not minimizes the Bayesian risk, then $b$ does. Hence {\sc better decision} (checking if $a$ is better than $b$) and {\sc optimal Bayesian decision} (checking if $a$ is the optimal alternative) are ${\sf P}_{||}^{\sf NP}$-hard.
\end{proof}
\smallskip

We note that the {\sc optimal Bayesian decision} for the framework in
Theorem~\ref{thm:compmodel1} is equivalent to checking whether a given
alternative $c$ is in $f_B^1(P)$. We do not know whether these
problems are ${\sf P}_{||}^{\sf NP}$-complete.
\begin{thm}\label{thm:compmodel2} For any rational number $\varphi$,\footnote{We require $\varphi$ to be rational to avoid representational issues.}  {\sc better Bayesian decision} and {\sc optimal Bayesian decision} for $\mf_\varphi^2$ under uniform prior are in ${\sf P}$.
\end{thm}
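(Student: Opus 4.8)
The plan is to reduce both problems to evaluating, comparing, and minimizing the single-alternative Bayesian risks $R_B(P,c)$, for which we will exhibit a closed-form product formula that is computable in polynomial time. The crucial structural fact about $\mf_\varphi^2$ is that, because the parameter space $\mb(\mc)$ consists of \emph{all} irreflexive, antisymmetric, total relations, the orientation of each unordered pair $\{a,b\}$ may be chosen independently. By Lemma~\ref{lem:wmg}, $\Pr(P\mid W)\propto\prod_{c\succ_W b}\varphi^{-w_P(c,b)/2}$, so after grouping the ordered pairs $a\succ b$ and $b\succ a$ the likelihood is a product $\prod_{\{a,b\}}h_{\{a,b\}}(W)$ in which the factor $h_{\{a,b\}}(W)$ depends only on how $W$ orients $\{a,b\}$, equalling $\varphi^{-w_P(a,b)/2}$ when $a\succ_W b$ and $\varphi^{w_P(a,b)/2}$ when $b\succ_W a$ (using $w_P(b,a)=-w_P(a,b)$). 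Since $W\mapsto$ (its $\binom m2$ pairwise orientations) is a bijection from $\mb(\mc)$ onto the product of the per-pair orientation sets, under the uniform prior the posterior $\Pr(W\mid P)$ is exactly a product measure over the pairwise orientations, with $\Pr(a\succ_W b\mid P)=\varphi^{-w_P(a,b)/2}/(\varphi^{-w_P(a,b)/2}+\varphi^{w_P(a,b)/2})=1/(1+\varphi^{w_P(a,b)})$. This is precisely where $\mm_\varphi^2$ parts ways with $\mm_\varphi^1$: there the parameter is a linear order, so the orientations are forced to be transitive and are not independent, which is what allows the hardness reduction of Theorem~\ref{thm:compmodel1}.

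From this independence the rest is bookkeeping. Since $L_{top}(\theta,c)=1$ unless $c$ beats every $a\ne c$ in $\theta$, and by independence $\Pr(c\succ_W a\text{ for all }a\ne c\mid P)=\prod_{a\ne c}1/(1+\varphi^{w_P(c,a)})$, we obtain
\[
R_B(P,c)=1-\prod_{a\in\mc\setminus\{c\}}\frac{1}{1+\varphi^{w_P(c,a)}},\qquad R_B(P,C)=\frac1{|C|}\sum_{c\in C}R_B(P,c),
\]
the second equality by linearity of expectation. Each $w_P(c,a)$ lies in $\{-n,\dots,n\}$ and is read off $P$ in polynomial time; writing $\varphi=p/q$ in lowest terms, each factor $1/(1+\varphi^{w_P(c,a)})$ is a rational of bit-length $O(n(\log p+\log q))$, so the product of the $m-1$ factors, hence $R_B(P,c)$, and hence $R_B(P,C)$ for any given $C$, is a rational of bit-length polynomial in the input size, computable in polynomial time.

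Finally, {\sc better Bayesian decision} for $\mf_\varphi^2$ merely compares the two explicitly computed rationals $R_B(P,d_1)$ and $R_B(P,d_2)$, which is done in polynomial time. For {\sc optimal Bayesian decision} we observe that for every nonempty $C$ we have $R_B(P,C)=\frac1{|C|}\sum_{c\in C}R_B(P,c)\ge\min_{a\in\mc}R_B(P,a)$, with equality iff every element of $C$ attains that minimum; hence we compute $R_B(P,a)$ for all $a\in\mc$, let $m^\star=\min_a R_B(P,a)$, and accept the given $d$ iff $R_B(P,d)=m^\star$, equivalently iff $d\subseteq\arg\min_a R_B(P,a)$ --- all in polynomial time. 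The only genuinely substantive step is the factorization of the posterior into a product measure; once that is in hand the main thing to watch is the half-integer exponent $w_P(a,b)/2$, which causes no representational trouble because it occurs symmetrically in numerator and denominator and cancels to the integer-exponent form $1/(1+\varphi^{w_P(a,b)})$.
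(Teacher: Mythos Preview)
Your proof is correct and follows essentially the same route as the paper: both reduce the problems to the closed-form risk formula $R_B(P,c)=1-\prod_{b\neq c}\bigl(1+\varphi^{w_P(c,b)}\bigr)^{-1}$ (this is the paper's Theorem~\ref{thm:comp}), then read off polynomial-time computability. Your presentation makes the underlying product-measure structure of the posterior over $\mb(\mc)$ explicit, whereas the paper obtains the same formula by a direct Bayes-rule calculation; you also supply the bit-length analysis and the reduction of set-valued decisions to single-alternative risks, which the paper leaves implicit.
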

The theorem is a corollary of the following stronger theorem that provides a closed-form formula for Bayesian loss for $\mf_\varphi^2$.\footnote{The formula resembles Young's calculation for three alternatives~\cite{Young88:Condorcet}, where it was not clear whether the calculation was done for $\mf_\varphi^2$. Recently it was clarified by~\citet{Xia14:Deciphering} that this is indeed the case.} We recall that for any profile $P$ and any pair of alternatives $c,b$, $w_P(c,b)$ is the weight on $c\ra b$ in the weighted majority graph of $P$.

\begin{thm}\label{thm:comp} For $\mf_\varphi^2$ under uniform prior, for any $c\in\mc$, 
$R_B(P,c)=1-\prod_{b\neq c}\dfrac{1}{1+\varphi^{w_P(c,b)}}$.
\end{thm}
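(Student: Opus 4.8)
The plan is to reduce the claim to a single structural fact about the Condorcet model: the parameter space $\mb(\mc)$ is an \emph{unconstrained} product over unordered pairs of alternatives. A relation $W\in\mb(\mc)$ is nothing more than an independent choice of orientation for each of the $\binom{m}{2}$ pairs (there is no transitivity requirement, so cyclic $W$ are allowed), and by Lemma~\ref{lem:wmg} the likelihood factors accordingly: up to a constant depending only on $n$, $m$, and $\varphi$ --- hence not on $W$ and not on which alternative we score --- one has $\Pr(P|W)=\prod_{a\succ_W b}\varphi^{-w_P(a,b)/2}$. This factorization is the only real content of the proof; everything else is bookkeeping.

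First I would unpack the Bayesian risk. Since $L_{top}(\theta,c)=0$ exactly when $c$ is top-ranked in $\theta$ and $L_{top}(\theta,c)=1$ otherwise, $R_B(P,c)=E_{\theta|P}L_{top}(\theta,c)=1-\Pr(c\text{ is top-ranked in }\theta\mid P)$. By Bayes' rule under the uniform prior (the computation underlying Lemma~\ref{lem:cal}), this posterior probability equals $\dfrac{\sum_{W\in\mb_c(\mc)}\Pr(P|W)}{\sum_{W\in\mb(\mc)}\Pr(P|W)}$, and the $W$-independent constant from Lemma~\ref{lem:wmg} (together with the $Z^{-n}$ factors) cancels, so I may use $\Pr(P|W)=\prod_{a\succ_W b}\varphi^{-w_P(a,b)/2}$ in both sums.

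Next comes the factorization step. Summing $\prod_{a\succ_W b}\varphi^{-w_P(a,b)/2}$ over all $W\in\mb(\mc)$ distributes over unordered pairs, since the orientations are independent: the pair $\{a,b\}$ contributes $\varphi^{-w_P(a,b)/2}$ in one orientation and $\varphi^{-w_P(b,a)/2}=\varphi^{w_P(a,b)/2}$ in the other, so the denominator is $\prod_{\{a,b\}}(\varphi^{-w_P(a,b)/2}+\varphi^{w_P(a,b)/2})$. In the numerator the pairs incident to $c$ are forced to point away from $c$ (that is the definition of $\mb_c(\mc)$), contributing the fixed factor $\prod_{b\neq c}\varphi^{-w_P(c,b)/2}$, while the pairs lying inside $\mc\setminus\{c\}$ are still free and contribute exactly the factor they contribute to the denominator. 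Dividing, every pair avoiding $c$ cancels, leaving $\Pr(c\text{ top}\mid P)=\prod_{b\neq c}\dfrac{\varphi^{-w_P(c,b)/2}}{\varphi^{-w_P(c,b)/2}+\varphi^{w_P(c,b)/2}}$; multiplying numerator and denominator of each factor by $\varphi^{w_P(c,b)/2}$ rewrites this as $\prod_{b\neq c}\dfrac{1}{1+\varphi^{w_P(c,b)}}$, which yields $R_B(P,c)=1-\prod_{b\neq c}\dfrac{1}{1+\varphi^{w_P(c,b)}}$ as claimed.

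Finally, Theorem~\ref{thm:compmodel2} drops out as a corollary: each $w_P(c,b)$ is computable in polynomial time from $P$, and for rational $\varphi$ the closed form represents $R_B(P,c)$ as a rational of polynomially bounded bit-length, so {\sc better Bayesian decision} amounts to comparing two such rationals and {\sc optimal Bayesian decision} to comparing $R_B(P,d)$ against $R_B(P,b)$ for each $b\in\mc$. There is no genuine obstacle here; the only points that require a little care are verifying that the normalizing constant is independent both of $W$ and of the scored alternative (so that it really cancels), and noting that $n-w_P(c,b)$ is even, so the half-integer exponents appearing in intermediate expressions cause no trouble.
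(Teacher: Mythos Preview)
Your argument is correct and is essentially the same as the paper's: both exploit that $\mb(\mc)$ is an unconstrained product over unordered pairs, so the likelihood (via Lemma~\ref{lem:wmg}) factors pairwise, the pairs not incident to $c$ cancel between numerator and denominator, and each remaining factor simplifies to $1/(1+\varphi^{w_P(c,b)})$. The only cosmetic difference is that the paper carries out the computation in terms of the raw counts $P(a\succ b)$ and $K_{\{a,b\}}=\varphi^{P(a\succ b)}+\varphi^{P(b\succ a)}$ rather than the WMG weights, but this is the same factorization in different notation.
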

\begin{proof} Given a profile $P$, for any $c,b\in\mc$, we let $P(c\succ b)$ denote the number of times $c$ is preferred to $b$ in $P$. For any $c,b\in \mc$, let $K_{\{c,b\}}=\varphi^{P(c\succ b)}+\varphi^{P(b\succ c)}$. The theorem is equivalent to proving that $\sum_{V\in \mb_c(\mc)}\Pr(V|P)=\prod_{b\neq c}\dfrac{\varphi^{P(b\succ c)}}{K_{\{c,b\}}}$.  We first calculate $\pr(P)$. 
\begin{align*}
&\pr(P)=\sum_{W\in\mb_c(\mc)}\Pr(P|W)\cdot\Pr(W)\\
=&\pr(W)\cdot\frac{1}{Z^n}\cdot\prod_{\{c,b\}}(\varphi^{P(c\succ b)}+\varphi^{P(b\succ c)})\\
=&\pr(W)\cdot \frac{1}{Z^n}\cdot\prod_{\{c,b\}}K_{\{c,b\}}
\end{align*}
For any $c\in\mc$, we have:
\begin{align*}
\sum_{W\in \mb_c(\mc)}\pr(W|P)
=&\sum_{W\in \mb_c(\mc)}\pr(P|W)\cdot\frac{\pr(W)}{\pr(P)}\\
=&\frac{\pr(W)}{\pr(P)}\cdot\frac{1}{Z^n}\cdot \prod_{b\neq c}\varphi^{P(b\succ c)}\sum_{V'\in\mb(\mc-\{c\})}\varphi^{\kendall(P|_{\mc_{-c}},V')}\\
=&\frac{\pr(W)}{\pr(P)}\cdot\frac{1}{Z^n}\cdot \prod_{b\neq c}\varphi^{P(b\succ c)}\prod_{b,e\neq c}(\varphi^{P(e\succ b)}+\varphi^{P(b\succ e)})
=\prod_{b\neq c}\dfrac{\varphi^{P(b\succ c)}}{K_{\{c,b\}}}
\end{align*}
\end{proof}

The comparisons of Kemeny, $f_B^1$, and $f_B^2$ are summarized in Table~\ref{tab:comp}. According to the criteria we considered, none of the three outperforms the others. Kemeny does well in normative properties, but does not minimize Bayesian risk under either $\mf_\varphi^1$ or $\mf_\varphi^2$, and is hard to compute. $f_B^1$ minimizes the Bayesian risk under  $\mf_\varphi^1$, but is hard to compute. We would like to highlight $f_B^2$, which minimizes the Bayesian risk under  $\mf_\varphi^2$, and more importantly, can be computed in polynomial time despite the similarity between $\mf_\varphi^1$ and $\mf_\varphi^2$. This makes $f_B^2$ a practical voting rule that is also justified by Condorcet's model.

\section{Asymptotic Comparisons}

In this section, we ask the following question: as
the number of voters, $n\ra\infty$,
what is the probability that Kemeny, $f_B^1$, and $f_B^2$ choose
different winners? 

We show that when the data is generated from $\mm_\varphi^1$, all
three methods are equal {\em asymptotically almost surely (a.a.s.)},
that is, they are equal with probability $1$ as $n\ra\infty$.
\begin{thm}\label{thm:asymptotic1}
Let $P_n$ denote a profile of $n$ votes generated i.i.d.~from $\mm_\varphi^1$ given $W\in\ml_c(\mc)$. Then, $\Pr_{n\ra\infty}(\text{Kemeny}(P_n)=f_B^1(P_n)=f_B^2(P_n)=c)=1$.
\end{thm}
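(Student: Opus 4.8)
The plan is to reduce the whole statement to one high‑probability event about the weighted majority graph of $P_n$, and then read off the behaviour of each of the three rules on that event.

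\textbf{Step 1: the key event.} Let $\delta=\frac{1-\varphi}{1+\varphi}>0$. I would first establish that the Mallows model with ground truth $W$ is ``pairwise correct with a uniform margin'': for every pair $a\succ_W b$, a single vote satisfies $\Pr_{V\sim\mm_\varphi^1(W)}[a\succ_V b]\ge\frac{1}{1+\varphi}$, so $\Pr[a\succ_V b]-\Pr[b\succ_V a]\ge\delta$. This is a standard property of the Mallows model; it can be proved by the involution $\tau$ on $\ml(\mc)$ that exchanges the positions occupied by $a$ and $b$ in $V$. This $\tau$ is a bijection from $\{V:b\succ_V a\}$ onto $\{V:a\succ_V b\}$, and one checks that $\kendall(V,W)-\kendall(\tau(V),W)\ge 1$ whenever $b\succ_V a$ (the pair $\{a,b\}$ gets corrected, and each alternative lying strictly between $a$ and $b$ in $V$ contributes a further $+2$), so the ``wrong'' side has total Mallows weight at most $\varphi$ times the ``right'' side. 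Since $w_{P_n}(a,b)$ is a sum of $n$ i.i.d.\ $\pm1$ variables with mean $\ge\delta$ for $a\succ_W b$, Hoeffding plus a union bound over the $\binom m2$ pairs shows that
\[ \mathcal{G}_n:=\{\,w_{P_n}(a,b)\ge\tfrac{\delta}{2}\,n\ \text{ for every }a\succ_W b\,\} \]
satisfies $\Pr(\mathcal{G}_n)\to 1$. All remaining arguments are carried out on $\mathcal{G}_n$, and ``for $n$ large'' refers to a deterministic threshold depending only on $m$ and $\varphi$.

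\textbf{Step 2: Kemeny and $f_B^2$.} On $\mathcal{G}_n$ the graph $\wmg(P_n)$ is a tournament whose edges all point forward along $W$. Writing $S(V):=\sum_{a\succ_V d}w_{P_n}(a,d)$, the Kemeny score of $V$ equals $\frac12\big(n\binom m2-S(V)\big)$, so a Kemeny ranking maximizes $S(V)$; since each term is $\le|w_{P_n}(a,d)|$ and $W$ attains every one of these bounds (all margins being nonzero on $\mathcal{G}_n$), $W$ is the \emph{unique} Kemeny ranking, hence $\kemeny_\mc(P_n)=\{top(W)\}=\{c\}$. For $f_B^2$, apply Theorem~\ref{thm:comp}: $R_B(P_n,e)=1-\prod_{b\ne e}(1+\varphi^{w_{P_n}(e,b)})^{-1}$. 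On $\mathcal{G}_n$, for $e=c$ every exponent $w_{P_n}(c,b)\ge\frac\delta2 n$, so $R_B(P_n,c)\le 1-(1+\varphi^{\delta n/2})^{-(m-1)}\to0$; for any $e\ne c$ the exponent $w_{P_n}(e,c)\le-\frac\delta2 n$ drives the $b=c$ factor (hence the whole product, all factors lying in $[0,1]$) to $0$, so $R_B(P_n,e)\ge 1-(1+\varphi^{-\delta n/2})^{-1}\to1$. Thus for $n$ large $c$ is the unique minimizer, i.e.\ $f_B^2(P_n)=\{c\}$.

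\textbf{Step 3: $f_B^1$ and conclusion.} Here there is no closed form, so I use Lemma~\ref{lem:cal}: $c\in f_B^1(P_n)$ iff $\sum_{V\in\ml_c(\mc)}\Pr(P_n|V)\ge\sum_{V\in\ml_b(\mc)}\Pr(P_n|V)$ for all $b$. By Lemma~\ref{lem:wmg}, $\Pr(P_n|V)=C_n\,\varphi^{-S(V)/2}$ with $C_n$ independent of $V$. On $\mathcal{G}_n$, exactly as in Step 2, $S(W)=\max_V S(V)$ and $S(W)-S(V)\ge\delta n$ for every $V\ne W$ (each disagreeing pair costs $\ge 2\cdot\frac\delta2 n$). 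Hence $\sum_{V\in\ml_c}\Pr(P_n|V)\ge C_n\varphi^{-S(W)/2}$ (the $V=W$ term, since $top(W)=c$), while for $b\ne c$ every $V\in\ml_b$ is $\ne W$, so $\sum_{V\in\ml_b}\Pr(P_n|V)\le (m-1)!\,C_n\varphi^{-S(W)/2}\varphi^{\delta n/2}$. As $\varphi<1$, for $n$ large $(m-1)!\,\varphi^{\delta n/2}<1$, so $c$ strictly beats every $b\ne c$ and $f_B^1(P_n)=\{c\}$. Combining the three steps, for all large $n$ we have $\mathcal{G}_n\subseteq\{\kemeny_\mc(P_n)=f_B^1(P_n)=f_B^2(P_n)=\{c\}\}$, and since $\Pr(\mathcal{G}_n)\to1$ the theorem follows.

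\textbf{Main obstacle.} The only genuinely non-routine ingredient is Step~1: proving that the Mallows model places correct pairwise comparisons with a margin bounded away from $1/2$ uniformly in the pair, via the position-swap involution. Once $\mathcal{G}_n$ is in hand, everything else is immediate --- Theorem~\ref{thm:comp} handles $f_B^2$, the Kemeny part is the familiar fact that a WMG consistent with a strict linear order has that order as its unique Kemeny ranking, and for $f_B^1$ one just notes that the number of rankings is the constant $m!$, so it cannot overcome the exponential‑in‑$n$ likelihood gap between $W$ and any other ranking.
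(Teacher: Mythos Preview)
Your proof is correct. The overall skeleton matches the paper's --- establish a.a.s.\ that the weighted majority graph of $P_n$ agrees with $W$, then read off each rule --- but the internal arguments differ in a couple of places.

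For $f_B^1$, the paper pairs each $V_c\in\ml_c(\mc)$ with the $V_b\in\ml_b(\mc)$ obtained by swapping $c$ and $b$, and shows (via a separate comparison lemma on pairwise marginals) that a.a.s.\ every such pair satisfies $\kendall(P_n,V_c)<\kendall(P_n,V_b)$, so the two sums compare termwise. You instead isolate the single dominant term $V=W$ in $\ml_c(\mc)$ and bound the entire sum over $\ml_b(\mc)$ by $(m-1)!\,\varphi^{\delta n/2}$ times it; this is cruder but more direct, and avoids the paper's auxiliary lemma. For Kemeny, the paper simply cites an external result that $c$ is the Condorcet winner a.a.s., whereas you derive the stronger fact that $W$ is the unique Kemeny ranking straight from $\mathcal G_n$. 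The paper also uses the law of large numbers rather than Hoeffding for the concentration step, but that is cosmetic.

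One small slip in Step~1: you write that each alternative strictly between $a$ and $b$ in $V$ ``contributes a further $+2$'' to $\kendall(V,W)-\kendall(\tau(V),W)$. In fact such an alternative $d$ contributes either $0$ or $+2$, depending on whether $d$ also lies between $a$ and $b$ in $W$ (since $W$ is a linear order with $a\succ_W b$, the two flipped pairs $(a,d)$ and $(b,d)$ cannot both move the wrong way, but they can cancel). This does not affect the bound $\kendall(V,W)-\kendall(\tau(V),W)\ge 1$, which is all you use, so the argument goes through.
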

\begin{sketch} It is not hard to see that asymptotically almost surely, for any pair of alternatives $a,b\in\mc$, the number of times $a\succ b$ in $P_n$ is $(1+o(1))n\Pr(a\succ b|W)$. As a corollary of a stronger theorem by~\cite{Caragiannis13:When}, as $n\ra\infty$, $c$ is the Condorcet winner, which means that $\Pr_{n\ra\infty}(\text{Kemeny}(P_n)=c)=1$. 

We now prove a lemma that will be useful for $f_B^1$ and $f_B^2$.
\begin{lemma}
\label{lem:comparison} For any $W\in\ml_c(\mc)$, any alternatives $a,b$ that are different from $c$, $\Pr(c\succ b|W)>\Pr(a\succ b|W)$.
\end{lemma}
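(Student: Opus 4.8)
The plan is to reduce the comparison of two marginal pairwise probabilities to a comparison of two ``triple-order'' probabilities, and then to beat one against the other with an explicit label-swapping bijection that strictly raises the Mallows weight.

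\emph{Step 1: reduction to a triple.} Fix a random vote $V$ drawn from $\Pr(\cdot\mid W)$, i.e.\ from the Mallows distribution with center $W$ and dispersion $\varphi$. Splitting the event $\{c\succ_V b\}$ according to whether $a\succ_V b$ or $b\succ_V a$, and doing the same for $\{a\succ_V b\}$, the common part $\{c\succ_V b,\,a\succ_V b\}$ cancels and (using $a\neq b$ and transitivity) we obtain $\Pr(c\succ_V b\mid W)-\Pr(a\succ_V b\mid W)=\Pr(c\succ_V b\succ_V a\mid W)-\Pr(a\succ_V b\succ_V c\mid W)$, where the two events on the right are exactly ``the restriction of $V$ to $\{a,b,c\}$ is the order $[c,b,a]$'' and ``it is $[a,b,c]$''. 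So it suffices to show the first of these probabilities strictly exceeds the second.

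\emph{Step 2: the swapping bijection.} Let $\psi:\ml(\mc)\to\ml(\mc)$ interchange the labels $a$ and $c$ everywhere and fix every other alternative. Then $\psi$ is an involution, hence a bijection, and it restricts to a bijection from $\{V:V|_{\{a,b,c\}}=[a,b,c]\}$ onto $\{V:V|_{\{a,b,c\}}=[c,b,a]\}$; the source set is nonempty. The heart of the argument is the claim that for every $V$ with $a\succ_V c$ one has $\kendall(\psi(V),W)\le\kendall(V,W)-1$. Granting this, since $0<\varphi<1$ we get $\varphi^{\kendall(\psi(V),W)}>\varphi^{\kendall(V,W)}$ term by term; summing over all $V$ with $V|_{\{a,b,c\}}=[a,b,c]$ (each of which satisfies $a\succ_V c$) and substituting $V\mapsto\psi(V)$ yields $\sum_{U|_{\{a,b,c\}}=[c,b,a]}\varphi^{\kendall(U,W)}>\sum_{V|_{\{a,b,c\}}=[a,b,c]}\varphi^{\kendall(V,W)}$, which after dividing by the normalizer $Z$ is precisely the strict inequality needed in Step 1.

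\emph{Step 3: proving the distance drops (the main, but routine, obstacle).} Applying $\psi$ changes only the pairwise comparisons involving $a$ or $c$, so partition $\mc\setminus\{a,c\}$ by position in $V$ into the elements above $a$, the elements strictly between $a$ and $c$, and the elements below $c$; comparisons with the first and third groups are untouched by $\psi$. The pair $\{a,c\}$ flips from $a\succ c$ to $c\succ a$, and since $c$ is $W$-maximal we have $c\succ_W a$, so this turns a disagreement with $W$ into an agreement: a change of $-1$. For each $x$ strictly between $a$ and $c$ in $V$, the pair $\{c,x\}$ flips from $x\succ c$ to $c\succ x$, again a $-1$ because $c\succ_W x$, while the pair $\{a,x\}$ flips the other way and changes by at most $+1$; hence each such $x$ contributes at most $0$. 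Adding up, $\kendall(\psi(V),W)-\kendall(V,W)\le-1$, as claimed. (In fact the change equals $-1-2\cdot\#\{x\text{ strictly between }a,c\text{ in }V:\ x\succ_W a\}$, but the bound is all we need.) The hypothesis $W\in\ml_c(\mc)$ is used only through $c\succ_W a$ and $c\succ_W x$, which is exactly why the inequality can fail for an alternative that is not $W$-maximal.
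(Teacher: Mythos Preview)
Your proof is correct and follows essentially the same route as the paper: the identical reduction $\Pr(c\succ b\mid W)-\Pr(a\succ b\mid W)=\Pr(c\succ b\succ a\mid W)-\Pr(a\succ b\succ c\mid W)$, followed by the bijection that swaps $a$ and $c$ (your label swap and the paper's ``switching positions'' are the same operation when only two elements are involved). The paper simply asserts $\kendall(V_{c\succ b\succ a},W)<\kendall(V_{a\succ b\succ c},W)$ without justification, so your Step~3 actually supplies the missing detail.
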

\begin{proof}
We have $\Pr(c\succ b|W)-\Pr(a\succ b|W)=\Pr(c\succ b\succ a|W)-\Pr(a\succ b\succ c|W)$.
For any linear order $V_{c\succ b\succ a}$ where $c\succ b\succ a$, we let $V_{a\succ b\succ c}$ denote the linear order obtained from $V_{c\succ b\succ a}$ by switching the positions of $c$ and $a$. It follows that $\kendall(V_{c\succ b\succ a},W)< \kendall(V_{a\succ b\succ c},W)$, which means that $\Pr({c\succ b}|W)> \Pr({a\succ b}|W)$.
\end{proof}

To prove the theorem for $f_B^1$, it suffices to prove that for any $b\neq c$ and any $0<\varphi<1$, asymptotically almost surely, we have $\sum_{V\in \ml_c(\mc)}\varphi^{\kendall(P_n,V)}>\sum_{V\in \ml_b(\mc)}\varphi^{\kendall(P_n,V)}$. For any $V_c\in \ml_c(\mc)$, we let $V_b$ denote the linear order obtained from $V_c$ by exchanging the positions of $c$ and $b$, which means that $V_b\in\ml_b(\mc)$.
\begin{lemma}\label{lem:limitkendall}$\Pr_{n\ra\infty}(\kendall(P_n,V_c)<\kendall(P_n,V_b))=1$.
\end{lemma}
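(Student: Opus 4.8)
\medskip
\noindent\textbf{Proof plan.} The idea is to reduce the claim to the sign of an expectation via the law of large numbers, and then to pin down that sign by a ``swap $b$ and $c$'' pairing argument, in the spirit of the proof of Lemma~\ref{lem:comparison}. Concretely, $\kendall(P_n,V_c)-\kendall(P_n,V_b)=\sum_{j=1}^n\big(\kendall(V_j,V_c)-\kendall(V_j,V_b)\big)$ is a sum of i.i.d.\ bounded terms, so by the strong law of large numbers $\tfrac1n\big(\kendall(P_n,V_c)-\kendall(P_n,V_b)\big)$ converges almost surely to $\Delta:=E_{V\sim\Pr(\cdot|W)}\big[\kendall(V,V_c)-\kendall(V,V_b)\big]$. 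Hence it suffices to prove $\Delta<0$: then a.a.s.\ the empirical quantity is negative, i.e.\ $\kendall(P_n,V_c)<\kendall(P_n,V_b)$. Since there are only finitely many $V_c\in\ml_c(\mc)$, the inequality then holds a.a.s.\ for all of them simultaneously, which is what the proof of Theorem~\ref{thm:asymptotic1} needs for $f_B^1$.

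Next I would identify the pairs that contribute to $\Delta$. Since $c$ is ranked first in $V_c$, the orders $V_c$ and $V_b$ disagree exactly on the pair $\{c,b\}$ and, for every alternative $u$ ranked above $b$ in $V_c$, on the two pairs $\{c,u\}$ and $\{b,u\}$; write $U$ for this set of ``in-between'' alternatives. On all other pairs the two Kendall terms agree and cancel, while on each disagreeing pair a given $V$ disagrees with exactly one of $V_c,V_b$, contributing $\pm1$. Collecting terms,
\[
\Delta=\big(2\Pr(b\succ_V c|W)-1\big)+\sum_{u\in U}\Big[\big(2\Pr(u\succ_V c|W)-1\big)+\big(2\Pr(b\succ_V u|W)-1\big)\Big].
\]

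Then I would show each bracket is strictly negative using one elementary fact: swapping $b$ and $c$ inside a linear order $V$ with $b\succ_V c$ yields $V'$ with $c\succ_{V'}b$ and, because $c\succ_W b$, strictly decreases $\kendall(\cdot,W)$ — the pair $\{c,b\}$ contributes $-1$, and every other affected pair (namely $\{a,b\}$ and $\{a,c\}$ for $a$ strictly between $b$ and $c$ in $V$) contributes $0$ or $-2$ — hence strictly increases the Mallows weight $\varphi^{\kendall(\cdot,W)}$. Summing this over the bijection $\{V:b\succ_V c\}\to\{V:c\succ_V b\}$ gives $\Pr(b\succ_V c|W)<\tfrac12$, so the first bracket is negative. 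For the $u$-brackets I would first rewrite
\[
\big(2\Pr(u\succ_V c|W)-1\big)+\big(2\Pr(b\succ_V u|W)-1\big)=2\big(\Pr(b\succ_V u\succ_V c|W)-\Pr(c\succ_V u\succ_V b|W)\big),
\]
using $\Pr(u\succ_V c)=\Pr(u\succ_V b,\,u\succ_V c)+\Pr(b\succ_V u\succ_V c)$ and the analogous decomposition of $\Pr(u\succ_V b)$, and then apply the same swap bijection from $\{V:b\succ_V u\succ_V c\}$ to $\{V:c\succ_V u\succ_V b\}$ to get that this term is negative too. Hence $\Delta<0$.

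The step I expect to be the main (though minor) obstacle is the elementary swap fact itself: one has to verify that, for $a$ strictly between $b$ and $c$ in $V$, the pairs $\{a,b\}$ and $\{a,c\}$ together never increase the distance to $W$ — a short case split on where $a$ sits relative to $b$ and $c$ in $W$ — while the pair $\{c,b\}$ strictly lowers it. Once that is in hand, the rest is bookkeeping plus the routine law-of-large-numbers argument above.
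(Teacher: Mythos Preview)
Your proposal is correct and follows essentially the same route as the paper: decompose $\kendall(P_n,V_b)-\kendall(P_n,V_c)$ into the contribution from the pair $\{c,b\}$ plus, for each alternative $u$ between $c$ and $b$ in $V_c$, the contributions from $\{c,u\}$ and $\{b,u\}$, then apply the law of large numbers and show each piece has the right sign. The only difference is cosmetic: the paper invokes Lemma~\ref{lem:comparison} directly to get $\Pr(a\succ b|W)-\Pr(a\succ c|W)>0$ for the cross-terms, whereas you reprove that same inequality inline via the identical swap-$b$-and-$c$ bijection that underlies Lemma~\ref{lem:comparison}.
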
 
\begin{proof} Given $V_c$, let $\mc'$ denote the set of alternatives between $c$ and $b$ in $V_c$. We have $\kendall(P_n,V_b)-\kendall(P_n,V_c)=\sum_{a\in\mc'}[w_{P_n}(a, b)-w_{P_n}(a, c)]+w_{P_n}(c, b)=\sum_{a\in\mc'}2n[\Pr(a\succ b|W)-\Pr(a\succ c|W)]+n(2\Pr(c\succ b|W)-1)+o(n)$, where we recall that $w_{P_n}(a\succ b)=P_n(a\succ b)-P_n(b\succ a)$. By Lemma~\ref{lem:comparison}, for all $a$ that is different from $b$ and $c$, $\Pr(c\succ a|W)>\Pr(b\succ a|W)$, which means $\Pr(a\succ b|W)-\Pr(a\succ c|W)>0$. Since $c$ is the Condorcet winner asymptotically almost sure, $\Pr(c\succ b|W)>1/2$. This proofs the claim.
\end{proof}

By Lemma~\ref{lem:limitkendall}, $\Pr_{n\ra\infty}(\forall V_c\in\ml_c(\mc),\kendall(P_n,V_c)<\kendall(P_n,V_d))=1$, which means that 
$$\Pr_{ n\ra\infty}\left(\forall V_c\in\ml_c(\mc),\varphi^{\kendall(P_n,V_c)}<\varphi^{\kendall(P_n,V_d)}\right)=1$$ Hence, $\Pr_{ n\ra\infty}(\sum_{V\in \ml_c(\mc)}\varphi^{\kendall(P_n,V)}>\sum_{V\in \ml_d(\mc)}\varphi^{\kendall(P_n,V)})=1$. This proves the theorem for $f_B^1$.

We use Theorem~\ref{thm:comp} and Lemma~\ref{lem:comparison} to prove
the theorem for $f_B^2$. We note that $\frac{\varphi^{P_n(b\succ
    c)}}{K_{\{c,b\}}}=\frac{1}{1+\varphi^{P_n(c\succ b)-P_n(b\succ
    c)}}=\frac{1}{1+\varphi^{2P_n(c\succ b)-n}}$. By
Lemma~\ref{lem:comparison}, $\Pr(c\succ b|W)>\Pr(a\succ b|W)$, which
means that asymptotically almost surely, we have the following steps
of reasoning:

\noindent(1) $P_n(c\succ b)>P_n(a\succ b)$ for all $a,b$.

\noindent(2) $\frac{1}{1+\varphi^{2P_n(c\succ b)-n}}>\frac{1}{1+\varphi^{2P_n(a\succ b)-n}}$ for all $a$ and $b$.

\noindent(3) $\frac{\varphi^{P_n(b\succ c)}}{K_{\{c,b\}}}>\frac{\varphi^{P_n(b\succ a)}}{K_{\{a,b\}}}$.

\noindent(4) For any $a\neq c$, $\prod_{b\neq c}\frac{\varphi^{P_n(b\succ c)}}{K_{\{c,b\}}}>\prod_{b\neq a}\frac{\varphi^{P_n(b\succ a)}}{K_{\{a,b\}}}$. 

Finally, applying Theorem~\ref{thm:comp} to (4), $c$ is the unique winner asymptotically almost surely.  This completes the proof of the theorem.
\end{sketch}

\begin{thm}\label{thm:asymptotic2} For any $W\in\mb(\mc)$ and any
  $\varphi$, $f_B^1(P_n)=\kemeny(P_n)$ a.a.s.~as $n\ra\infty$ and
  votes in $P_n$ are generated i.i.d.~from $\mm_\varphi^2$ given $W$.

For any $m\geq 5$, there exists $W\in\mb(\mc)$ such that for any $\varphi$, there exists $\epsilon>0$ such that  with probability at least $\epsilon$, $f_B^1(P_n)\neq f_B^2(P_n)$ and $\kemeny(P_n)\neq f_B^2(P_n)$~as $n\ra\infty$ and votes in $P_n$ are generated i.i.d.~from $\mm_\varphi^2$ given $W$.
\end{thm}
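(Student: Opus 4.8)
The first claim is the easy half. When votes in $P_n$ are i.i.d.~from $\mm_\varphi^2$ given $W\in\mb(\mc)$, the law of large numbers gives, for every ordered pair $(a,b)$, that $w_{P_n}(a,b)=(1+o(1))n(2\Pr(a\succ b|W)-1)$ a.a.s., so the sign of $w_{P_n}(a,b)$ is determined by the sign of $2\Pr(a\succ b|W)-1$, which in turn agrees with whether $a\succ b$ in $W$ (since in the Condorcet model $\Pr(a\succ b|W)>1/2$ iff $a\succ_W b$, by the same swap argument as Lemma~\ref{lem:comparison}). Hence a.a.s.~$\wmg(P_n)$ has the same edge orientations as $W$, with all weights growing linearly in $n$. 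I would then invoke the known characterization (via Lemma~\ref{lem:wmg} and the Kemeny MLE property of $\mm_\varphi^1$) that both $\text{Kemeny}(P_n)$ and $f_B^1(P_n)$ depend on $P_n$ only through $\wmg(P_n)$ and are determined by the edge signs when all weights are $\Theta(n)$: the a.a.s.~stable $\wmg$ forces a unique Kemeny order (or a fixed set of them), and the argument in the proof of Theorem~\ref{thm:asymptotic1} (Lemmas~\ref{lem:limitkendall}) applies verbatim with ``$W$ linear'' replaced by ``the a.a.s.~orientation of $\wmg(P_n)$,'' since that argument only used edge signs, not acyclicity. This yields $f_B^1(P_n)=\kemeny(P_n)$ a.a.s.

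For the second claim, the plan is to construct an explicit $W\in\mb(\mc)$ with $m=5$ (and pad with dummy alternatives dominated by everyone for $m>5$) whose cyclic structure makes $f_B^2$ disagree with the acyclicity-driven rules. Concretely, I would take $W$ to contain a $3$-cycle among $\{c_1,c_2,c_3\}$ together with two alternatives $\{c_4,c_5\}$ that each $c_i$ in the cycle beats, while $c_4\succ c_5$; the point is that for $\mm_\varphi^2$ the ground truth $W$ itself already has no top-ranked alternative, so after linearization the empirical $\wmg(P_n)$ a.a.s.~carries that same $3$-cycle with weights $\Theta(n)$. On such a $\wmg$, Kemeny and $f_B^1$ break the cycle ``by weight'' and pick a single winner (one of $c_1,c_2,c_3$, whichever the cycle weights favor), whereas by Theorem~\ref{thm:comp}, $R_B(P_n,c_i)=1-\prod_{b\neq c_i}\frac{1}{1+\varphi^{w_{P_n}(c_i,b)}}$, and since the three cycle alternatives each have one large positive edge, one large negative edge (within the cycle) and two large positive edges (to $c_4,c_5$), their products differ only through the cycle edges; by choosing the three cycle weights appropriately (e.g.~making the cycle ``balanced'' so that two of the $c_i$ tie in $f_B^2$ but not in Kemeny, or so that $f_B^2$ strictly prefers the $c_i$ that Kemeny ranks worst), I can force $f_B^2(P_n)\neq\kemeny(P_n)$ and $f_B^2(P_n)\neq f_B^1(P_n)$. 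Because the edge weights are $\Theta(n)$ and their ratios concentrate, these strict inequalities hold for all sufficiently large $n$ whenever the limiting $\wmg$ orientation occurs; and that orientation occurs with probability bounded below by some $\epsilon>0$ (in fact a.a.s., since the orientation is forced), giving the claimed $\epsilon$.

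The main obstacle is the explicit combinatorial design in the second part: I need a single $W$ and a single configuration of limiting edge weights $w_{ij}=\lim \tfrac1n w_{P_n}(c_i,c_j)$ (these are fixed rational multiples of $2\Pr(c_i\succ c_j|W)-1$, hence not free parameters — they are pinned down by $W$ and $\varphi$) for which the comparison $1-\prod_{b}\frac{1}{1+\varphi^{n w_{ib}}}$ versus the Kemeny/$f_B^1$ winner genuinely flips, simultaneously for all $\varphi\in(0,1)$. The delicate point is the ``for any $\varphi$'' quantifier: the cycle weights themselves depend on $\varphi$, so I must verify the flip is robust across the whole range, which I expect to handle by taking the cyclic part of $W$ symmetric (so the three intra-cycle gaps are equal in magnitude by neutrality) while the edges to $c_4,c_5$ are arranged so that exactly one of $c_1,c_2,c_3$ has a strictly larger $\prod_b \frac{1}{1+\varphi^{n w_{ib}}}$ than the Kemeny winner for every $\varphi$ — checking this reduces to a one-variable inequality in $\varphi$ that I would verify by elementary analysis. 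Padding to $m\ge 5$ alternatives is then immediate since dummies dominated by all of $\{c_1,\dots,c_5\}$ affect neither side.
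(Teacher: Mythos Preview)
Both halves of your proposal rest on the law of large numbers, but the paper's argument hinges essentially on the Central Limit Theorem, and this is precisely where your approach breaks.

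\textbf{First part.} Your claim that Kemeny and $f_B^1$ ``are determined by the edge signs when all weights are $\Theta(n)$'' is false: Kemeny resolves cycles by comparing edge \emph{magnitudes}, and when $W\in\mb(\mc)$ is (say) rotationally symmetric, several linear orders have the \emph{same} expected Kendall distance from $P_n$, so LLN alone does not isolate a Kemeny winner. Your appeal to Lemma~\ref{lem:limitkendall} also fails to transfer: its proof goes through Lemma~\ref{lem:comparison}, which needs $c$ to be the top element of a \emph{linear} $W$---there is no such $c$ for cyclic $W$. The paper's argument is different: by CLT, for any distinct $V,V'\in\ml(\mc)$ one has $|\kendall(P_n,V)-\kendall(P_n,V')|=\Omega(\sqrt n)$ a.a.s., so each sum $\sum_{V\in\ml_c}\varphi^{\kendall(P_n,V)}$ is dominated by its single largest term, and maximizing the sum over $c$ collapses to maximizing the max---which is exactly Kemeny.

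\textbf{Second part.} Your construction cannot produce a disagreement at the LLN scale. With a rotationally symmetric $3$-cycle on $\{c_1,c_2,c_3\}$ and two dominated alternatives, the symmetry of $W$ forces the limiting edge weights $w_{P_n}(c_i,c_j)/n$ to be rotation-invariant among $c_1,c_2,c_3$, so in the limit all three are tied under \emph{both} $f_B^2$ and Kemeny. You cannot ``arrange'' the edges to $c_4,c_5$ to break this, since (as you yourself note) those weights are pinned down by $W$ and $\varphi$; with your $W$ they are equal by the same symmetry. If instead you make $W$ asymmetric, the limiting weights become $\varphi$-dependent in a complicated way and the ``for any $\varphi$'' quantifier becomes the real obstacle. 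The paper sidesteps this entirely: it takes a fully rotation-symmetric $W$ on five alternatives (so the limiting WMG is symmetric for every $\varphi$), proves via Lemma~\ref{lem:noncorr} that the pairwise-comparison indicators have non-singular covariance, and then uses the multivariate CLT to show that a \emph{specific pattern of $O(\sqrt n)$ fluctuations} around the symmetric means occurs with probability at least some $\epsilon>0$; in that event the fluctuations tip $f_B^2$ toward $c_1$ while Kemeny (hence $f_B^1$, by the first part) selects $c_2$. Note the disagreement is a positive-probability event, not an a.a.s.\ one---your stronger claim that it holds ``for all sufficiently large $n$'' is exactly what forced you into the unworkable LLN regime.
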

\begin{sketch} Due to the Central Limit Theorem, for any $V,W\in \mb(\mc)$, $|\kendall(P_n,V)-\kendall(P_n,W)|=\Omega(\sqrt n)$ a.a.s. By Lemma~\ref{lem:wmg} and Lemma~\ref{lem:cal}, any $f_B^1$ winner $c$ maximizes $\sum_{V_c\in\ml_c(\mc)}\varphi^{\kendall(P_n,V_c)}\approx \max_{V_c\in\ml_c(\mc)}\varphi^{\kendall(P_n,V_c)}$ a.a.s. This means that $c$ is the Kemeny winner a.a.s.

For the second part, we sketch a proof for $m=5$. Other cases can be proved similarly. Let $W$ denote the binary relation as shown in Figure~\ref{fig:asymptotic2}.
\begin{figure}[htp]
\centering
\includegraphics[trim=0 13cm 19cm 0, clip=true, width=.16\textwidth]{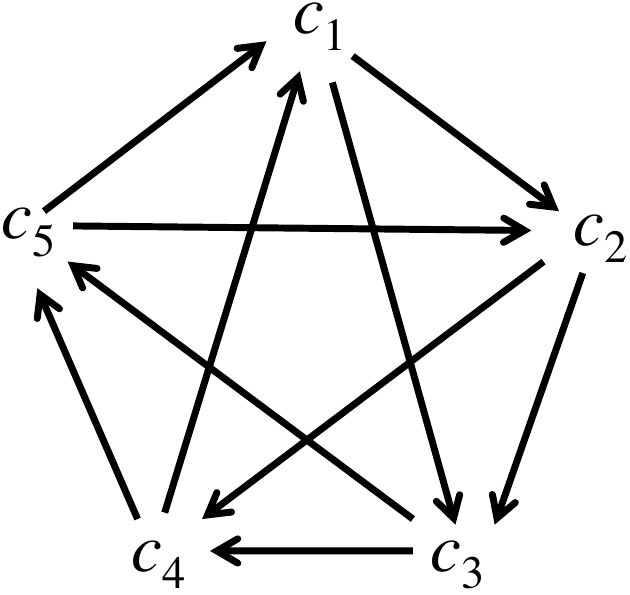}
\caption{$W\in\mb(\mc)$ for $m=5$.\label{fig:asymptotic2}}
\end{figure}

It can be verified that for all $i\leq 5$, $\Pr(c_i\succ c_{i+1}|W)$ (we let $c_1 = c_6$) are the same and are larger than $1/2$, denoted by $p_1$; for all $i\leq 5$, $\Pr(c_i\succ c_{i+2}|W)$ are the same and are larger than $1/2$, denoted by $p_2$. We define a random variable $X_{c\succ b}$ for any $c\succ_W b$ such that for any  $V\in\ml(\mc)$, if $c\succ_V b$ then $X_{c\succ b}=1$ otherwise $X_{c\succ b}=-1$.
\begin{lemma}\label{lem:noncorr} $\{X_{c\succ b}:c\succ_W b\}$ are not linearly correlated.
\end{lemma}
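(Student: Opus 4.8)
The plan is to read ``not linearly correlated'' as: the covariance matrix of the random vector $(X_{c\succ b})_{c\succ_W b}$ is nonsingular (positive definite), or equivalently that no nontrivial affine relation $\sum_{c\succ_W b}\lambda_{c\succ b}X_{c\succ b}=\kappa$ holds identically on the support of the vote distribution. Since $\varphi\in(0,1)$, under $\mm_\varphi^2$ given $W$ every $V\in\ml(\mc)$ has positive probability, so ``identically on the support'' just means ``for every $V\in\ml(\mc)$''. Thus it suffices to show that the $\binom m2+1$ functions $\{X_{c\succ b}:c\succ_W b\}\cup\{\mathbf 1\}$ on $\ml(\mc)$ are linearly independent, and I would phrase the proof entirely at that level.

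First I would fix an arbitrary unordered pair $\{c,b\}$, say with $c\succ_W b$ so that $X_{c\succ b}$ is in the collection, and pick a linear order $V\in\ml(\mc)$ in which $c$ and $b$ are adjacent with $c$ immediately above $b$ (e.g.\ rank $c$ first, $b$ second, the rest arbitrarily); let $V'$ be obtained from $V$ by transposing $c$ and $b$, which is again a linear order. The key elementary observation is that an adjacent transposition changes exactly one pairwise comparison: for every unordered pair $\{x,y\}\neq\{c,b\}$ the relative order of $x$ and $y$ agrees in $V$ and $V'$, so $X_{x\succ y}(V)=X_{x\succ y}(V')$ for whichever orientation $x\succ_W y$ appears in the collection, whereas $X_{c\succ b}(V)=+1$ and $X_{c\succ b}(V')=-1$. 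Plugging $V$ and $V'$ into a hypothetical relation $\sum_{x\succ_W y}\lambda_{x\succ y}X_{x\succ y}+\kappa\mathbf 1=0$ and subtracting gives $2\lambda_{c\succ b}=0$, hence $\lambda_{c\succ b}=0$. As $\{c,b\}$ was arbitrary, all coefficients vanish, and then $\kappa=0$; so the functions are linearly independent and $\text{Cov}\big((X_{c\succ b})\big)$ is positive definite.

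I would then add one remark: this argument uses nothing about the specific $W$ of Figure~\ref{fig:asymptotic2}, nor the values $p_1,p_2$, nor anything about $\varphi$ beyond $\varphi\in(0,1)$ (it holds for any tournament $W$ on $\mc$); the special cyclic structure of $W$ is only needed later to identify the limiting Gaussian's covariance and to pin down the event on which $f_B^1$, $f_B^2$, and Kemeny disagree. As for difficulty: there is essentially no analytic content here, so the only thing to get right is the bookkeeping — namely, fixing the intended meaning of ``not linearly correlated'' and carefully checking the claim that an adjacent transposition flips precisely one of the indicator variables; everything else is the one-line subtraction above. The ``hard part'', to the extent there is one, is resisting the temptation to compute the $\binom m2\times\binom m2$ covariance matrix explicitly, which the transposition trick lets us avoid.
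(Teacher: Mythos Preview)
Your proposal is correct and is essentially the same argument as the paper's: both pick a pair $\{c,b\}$, find a linear order $V$ with $c$ and $b$ adjacent, transpose to get $V'$, and observe that exactly the variable $X_{c\succ b}$ flips sign, forcing its coefficient to vanish. Your version is a bit more carefully packaged (you make the full-support observation explicit and carry the constant term $\kappa$ so that the conclusion is genuinely about nonsingularity of the covariance matrix), but the key idea is identical.
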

\begin{proof} Suppose for the sake of contradiction $\{X_{c\succ b}:c\succ_W b\}$ are linearly correlated. For any $X_{c\succ b}$ whose coefficient is non-zero, there exists a linear order $V$ where $c$ and $b$ are ranked adjacently. Let $V'$ denote the linear order obtained from $V$ by switching the positions of $c$ and $b$. We note thta $X_{c\succ b}(V)=-X_{c\succ b}(V')$, and other random variables in $\{X_{c\succ b}:c\succ_W b\}$ take the same values at $V$ and $V'$, this leads to a contradiction.
\end{proof}

Then, it follows from the multivariate Lindeberg-L\'evy Central Limit Theorem (CLT)~\citep[Theorem D.18A]{Greene11:Econometric} that $\{(\sum_{j=1}^nX_{c\succ b}-pn)/\sqrt n:c\succ_W b\}$ converges in distribution to a multivariate normal distribution $\mn(0,\Sigma)$, where $\Sigma$ is the covariance matrix, and is non-singular by Lemma~\ref{lem:noncorr}. We note that $\sum_{j=1}^nX_{c\succ b}=P_n(c\succ b)$. 

Hence, with positive probability the following hold at the same time in $\text{WMG}(P_n)$:

\noindent $\bullet$ $0<w_{P_n}(c_5,c_1)-(2p_1-1)n<\sqrt n$; $0<w_{P_n}(c_4,c_1)-(2p_2-1)n<\sqrt n$.

\noindent $\bullet$ $\sqrt n<w_{P_n}(c_1,c_2)-(2p_1-1)n<2\sqrt n$; $\sqrt n<w_{P_n}(c_5,c_2)-(2p_2-1)n<2\sqrt n$; $0<w_{P_n}(c_1,c_3)-(2p_2-1)n<\sqrt n$.

\noindent $\bullet$ For any other $c_i\succ_Wc_j$ not mentioned above, $5\sqrt n<w_{P_n}(c_i,c_j)-(2\Pr(c_i\succ c_j|W)-1)n$.

If $P_n$ satisfies all above conditions, then by Theorem~\ref{thm:comp} $f_B^2(P_n)=\{c_1\}$. Meanwhile, $\kemeny(P_n)=f_B^1(P_n)=\{c_2\}$ with $[c_2\succ c_3\succ c_4\succ c_5\succ c_1]$ minimizing the total Kendall-tau distance. This shows that $f_B^2(P_n)\neq \kemeny(P_n)$ with non-negligible probability as $n\ra\infty$,
and completes the proof of the theorem.
\end{sketch}

Theorem~\ref{thm:asymptotic1} suggests that, when $n$ is large and the votes are generated from $\mm_\varphi^1$, all of $f_B^1$, $f_B^2$, and Kemeny will choose the alternative ranked in the top of the ground truth as the winner. Similar observations have been made for other voting rules by~\cite{Caragiannis13:When}. On the other hand, Theorem~\ref{thm:asymptotic2} tells us that when the votes are generated from $\mm_\varphi^2$, interestingly, for some ground truth parameter $f_B^2$ is different from the other two with non-negligible probability, and as we will see in the next subsection, we are very confident that such probability is quite large (about $30\%$ for given $W$ shown in Figure~\ref{fig:asymptotic2}).

\subsection{Experiments}

By Theorem~\ref{thm:asymptotic1} and~\ref{thm:asymptotic2}, Kemeny and $f_B^1$ are asymptotically equal when the data are generated from
$\mm_\varphi^1$ or $\mm_\varphi^2$.  Hence, we focus on the comparison
between rule $f_B^2$ and Kemeny using synthetic data generated from
$\mm_\varphi^2$ given the binary relation $W$ illustrated in
Figure~\ref{fig:asymptotic2}.

By Theorem~\ref{thm:comp}, the exact computation of Bayesian risk involves computing $\varphi^{\Omega(n)}$, which is exponentially small for large $n$ since $\varphi<1$. Hence, we need a special data structure to handle the computation of $f_B^2$, because a straightforward implementation easily loses precision. In our experiments, we use the following approximation for $f_B^2$:

\begin{dfn}\label{dfn:g} For any $c\in\mc$ and profile $P$, let $s(c,P) = \sum_{b:w_{P}(b,c)>0}w_P(b,c)$. Let $g$ be the voting rule such that for any profile $P$, $g(P)=\arg\min_cs(c,P)$.
\end{dfn}

In words, $g$ selects the alternative $c$ with the minimum total weight on the incoming edges in the WMG. By Theorem~\ref{thm:comp}, a $f_B^2$ winner $c$ maximizes $\prod_{b\neq c}\frac{\varphi^{P(b\succ c)}}{K_{\{c,b\}}}=\prod_{b\neq c}\frac{1}{1+\varphi^{w_P(c,b)}}$, which means that $c$ minimizes $\prod_{b\neq c}(1+\varphi^{w_P(c,b)})$. In our experiments, $\prod_{b\neq c}(1+\varphi^{w_P(c,b)})$ is $(1+o(1))\varphi^{\sum_{b:w_{P}(b,c)>0}w_P(b,c)}$ for reasonably large $n$. Therefore, $g$ is a good approximation of $f_B^2$ with reasonably large $n$. Formally, this is stated in the following theorem.

\begin{thm}\label{thm:app} For any $W\in\mb(\mc)$ and any
  $\varphi$, $f_B^2(P_n)=g(P_n)$ a.a.s.~as $n\ra\infty$ and
  votes in $P_n$ are generated i.i.d.~from $\mm_\varphi^2$ given $W$.\end{thm}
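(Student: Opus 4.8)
The plan is to show that, asymptotically almost surely, the alternative $c$ minimizing $\prod_{b\neq c}(1+\varphi^{w_P(c,b)})$ (which by Theorem~\ref{thm:comp} is exactly an $f_B^2$ winner) is also the alternative minimizing $s(c,P_n) = \sum_{b:w_{P_n}(b,c)>0}w_{P_n}(b,c)$ (which is a $g$ winner). The key observation is that $\log\prod_{b\neq c}(1+\varphi^{w_{P_n}(c,b)}) = \sum_{b\neq c}\log(1+\varphi^{w_{P_n}(c,b)})$, and since $0<\varphi<1$, the term $\log(1+\varphi^{w})$ is negligible when $w$ is a large positive number and is $\Theta(|w|\log(1/\varphi))$ when $w$ is a large negative number. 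Thus the sum is dominated by the ``incoming'' edges of $c$, i.e.\ those $b$ with $w_{P_n}(c,b)<0$, equivalently $w_{P_n}(b,c)>0$.

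First I would establish the concentration fact that underlies everything: by the strong law of large numbers (or a Hoeffding bound), a.a.s.\ for every ordered pair $(a,b)$ we have $w_{P_n}(a,b) = n(2\Pr(a\succ b|W)-1) + o(n)$. In particular, every edge weight $w_{P_n}(a,b)$ is either $\Theta(n)$ in absolute value (when $\Pr(a\succ b|W)\neq 1/2$) or $o(n)$ (when $\Pr(a\succ b|W)=1/2$); handling the borderline $\Pr(a\succ b|W)=1/2$ case requires a little care but contributes only $\log(1+\varphi^{o(n)}) = \log 2 + o(1) = O(1)$ to the sum, which is asymptotically dominated by any genuine $\Theta(n)$ term. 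Second, I would write, for each $c$,
\begin{align*}
\sum_{b\neq c}\log\!\left(1+\varphi^{w_{P_n}(c,b)}\right) = \sum_{b:\,w_{P_n}(b,c)>0}\log\!\left(1+\varphi^{-w_{P_n}(b,c)}\right) + \sum_{b:\,w_{P_n}(c,b)\ge 0}\log\!\left(1+\varphi^{w_{P_n}(c,b)}\right),
\end{align*}
and bound the second sum by $O(m)=O(1)$ (since $0\le\varphi^{w}\le 1$ there), while in the first sum use $\log(1+\varphi^{-w}) = w\log(1/\varphi) + \log(1+\varphi^{w}) = w\log(1/\varphi) + O(1)$ for $w>0$. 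Hence $\sum_{b\neq c}\log(1+\varphi^{w_{P_n}(c,b)}) = s(c,P_n)\log(1/\varphi) + O(1)$.

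Third, I would compare two alternatives $c,c'$: the difference of their log-objectives equals $(s(c,P_n)-s(c',P_n))\log(1/\varphi) + O(1)$. Since a.a.s.\ $s(c,P_n)$ and $s(c',P_n)$ differ by $\Theta(n)$ whenever they differ at all — because each $w_{P_n}(b,c)$ is either $\Theta(n)$ or $o(n)$, so $s(c,P_n)$ takes one of finitely many values each separated by $\Theta(n)$, unless two are asymptotically equal — the sign of the difference of objectives agrees with the sign of $s(c,P_n)-s(c',P_n)$ for $n$ large. Therefore $\arg\min_c \prod_{b\neq c}(1+\varphi^{w_{P_n}(c,b)}) = \arg\min_c s(c,P_n)$ a.a.s., i.e.\ $f_B^2(P_n)=g(P_n)$ a.a.s.

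The main obstacle is the degenerate case where the ground truth $W$ gives $\Pr(a\succ b|W)=1/2$ for some pair, or, more subtly, where two distinct alternatives $c,c'$ have $s(c,P_n)$ and $s(c',P_n)$ that are asymptotically equal (both $= \lambda n + o(n)$ for the same $\lambda$) — then the $\Theta(n)$ terms cancel and the tie is broken by lower-order fluctuations of size $\Theta(\sqrt n)$ or $O(1)$, and one must check that the $O(1)$ discrepancy between $s(\cdot,P_n)\log(1/\varphi)$ and the true log-objective does not flip the comparison. This can be resolved by noting that in that regime the $\Theta(\sqrt n)$ CLT-scale fluctuations of $s(c,P_n)-s(c',P_n)$ still dominate the $O(1)$ approximation error, so a.a.s.\ the two rules still agree on which of $c,c'$ is preferred; a.a.s.\ the winner set is therefore identical.
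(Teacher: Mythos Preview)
Your approach is exactly what the paper has in mind: the paper does not give a formal proof of this theorem, only the one-sentence justification preceding its statement that $\prod_{b\neq c}(1+\varphi^{w_P(c,b)})$ is, for large $n$, essentially $\varphi$ raised to $-s(c,P)$. Your decomposition $\sum_{b\neq c}\log(1+\varphi^{w_{P_n}(c,b)})=s(c,P_n)\log(1/\varphi)+O(1)$ (with the $O(1)$ uniform in $n$, depending only on $m$) is correct and is the heart of the matter, and your discussion of tie-breaking via $\Theta(\sqrt n)$ fluctuations is already more careful than anything the paper supplies.

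One small slip worth fixing: the aside that a borderline edge with $\Pr(a\succ b\mid W)=1/2$ ``contributes only $\log(1+\varphi^{o(n)})=\log 2+o(1)=O(1)$'' is not right, since $o(n)$ includes values like $-\sqrt n$, for which $\log(1+\varphi^{-\sqrt n})\sim\sqrt n\,\log(1/\varphi)\to\infty$. This does not damage your argument, because the sign-based split in your next paragraph already handles every term correctly: whenever $w_{P_n}(b,c)>0$ that term is absorbed into $s(c,P_n)$, and whenever $w_{P_n}(c,b)\ge 0$ the contribution is genuinely in $[0,\log 2]$, regardless of whether the underlying expectation is zero.
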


In our experiments, data are generated by $\mm_\varphi^2$ given $W$ in Figure~\ref{fig:asymptotic2} for $m=5$, $n\in \{100,200,\ldots,2000\}$, and $\varphi\in\{0.1,0.5,0.9\}$. For each setting we generate $1500$ profiles, and calculate the percentage for $g$ and $\kemeny$ to be different. The results are shown in Figuire~\ref{fig:exp}.
We observe that for $\varphi=0.1$ and $0.5$, the probability for $g(P_n)\neq \kemeny(P_n)$ is about $30\%$ for most $n$ in our experiments; when $\varphi=0.9$, the probability is about $10\%$. In light of Theorem~\ref{thm:app}, these results confirm Theorem~\ref{thm:asymptotic2}. We have also conducted similar experiments for $\mm_\varphi^1$, and found that the $g$ winner is the same as the $\kemeny$ winner in all $10000$ randomly generated profiles with $m=5,n=100$. This provides a sanity check for Theorem~\ref{thm:asymptotic1}. 
\begin{figure}
\centering
\includegraphics[width=.45\textwidth]{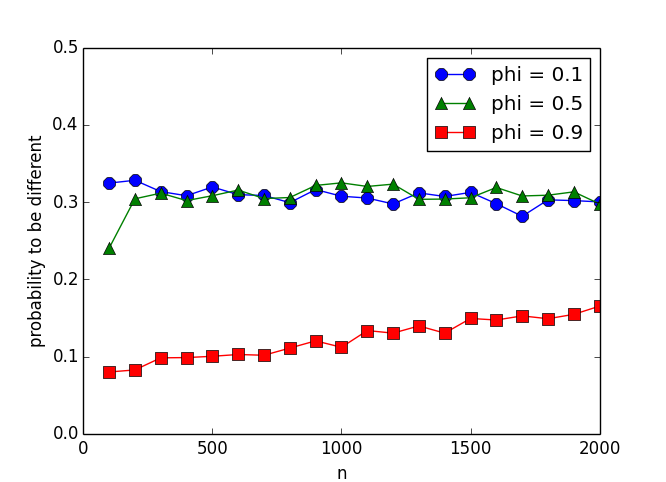}
\caption{\small Probability that $g$ is different from $\kemeny$ under $\mm_\varphi^2$.\label{fig:exp}}
\end{figure}

\section{Conclusions}

There are some immediate open questions for future work, including the
characterization of the exact computational complexity of $f_B^1$, and
the normative properties of $g$. More generally, it is interesting to study the
design and analysis of new voting rules using the proposed statistical
decision-theoretic framework under alternative probabilistic models,
e.g.~random utility models, other loss functions, e.g.~a smoother loss
function, and other sample spaces including partial orders of a fixed
set of $k$ alternatives.  We also plan to design and evaluate randomized
estimators, and estimators that minimizes the maximum expected loss or the maximum expected regret~\citep{Berger85:Statistical}.

\section{Acknowledgments} We thank Shivani Agarwal, Craig Boutilier, Yiling Chen, Vincent Conitzer, Edith Elkind, Ariel Procaccia, and anonymous reviewers of AAAI-14 and NIPS-14 for helpful suggestions and discussions. Azari Soufiani acknowledges Siebel foundation for the scholarship in his last year of PhD studies. Parkes was supported in part by NSF grant CCF \#1301976 and the SEAS TomKat fund. Xia acknowledges an RPI startup fund for support.

%
%

{


}

\end{document}